\newtheorem{lemma}{Lemma}
\newtheorem{theorem}{Theorem}
\newtheorem{assumption}{Assumption}
\tikzstyle{block}=[draw opacity=0.7,line width=1.4cm]
\tikzstyle{ag} = [circle, radius =3cm, text centered, draw=black]
\tikzstyle{startstop} = [rectangle, rounded corners, minimum width=3cm, minimum height=1cm,text centered, draw=black, fill=red!30]
\tikzstyle{io} = [trapezium, trapezium left angle=70, trapezium right angle=110, minimum width=1cm, minimum height=1cm, text centered, draw=black, fill=blue!30]
\tikzstyle{process} = [rectangle, draw,fill=orange!30, text width = 20em, text centered, rounded corners, minimum height=4em, minimum width=1cm]
\tikzstyle{upd} = [rectangle, draw,fill=orange!30, text width = 5em, text centered, rounded corners, minimum height=4em, minimum width=1cm]
\tikzstyle{decision} = [diamond, minimum width=3cm, minimum height=1cm, text centered, draw=black, fill=green!30]
\tikzstyle{arrow} = [thick,->,>=stealth]
\tikzset{
  solid node/.style={circle,draw,inner sep=1.2,fill=black},
  hollow node/.style={circle,draw,inner sep=1.2},
}
\newcommand{\bZ}{\mathbb{Z}}
\newcommand{\cX}{\mathcal{X}}
\newcommand{\bN}{\mathbb{N}}
\newcommand{\bR}{\mathbb{R}}
\newcommand{\bP}{\mathbb{P}}
\newcommand{\bE}{\mathbb{E}}
\newcommand{\cC}{\mathcal{C}}
\newcommand{\cT}{\mathcal{T}}
\newcommand{\cF}{\mathcal{F}}
\newcommand{\cH}{\mathcal{H}}
\newcommand{\cU}{\mathcal{U}}
\newcommand{\cE}{\mathcal{E}}
\newcommand{\cG}{\mathcal{G}}
\newcommand{\cJ}{\mathcal{J}}
\newcommand{\ust}{_{(\star)}}
\newcommand{\tht}{_{\theta(t)}}
\DeclareTextFontCommand{\emph}{\em}
\title{Learning in Networked Control Systems}
\author{Rahul Singh and P. R. Kumar
\thanks{Rahul Singh is with the Department of Electrical and Computer Engineering,
        The Ohio State University, Columbus, OH, 43210, USA
        {\tt\small rahulsiitk@gmail.com}}%
\thanks{P. R. Kumar is with the Department of Electrical \& Computer Engineering, Texas A\&M University, College Station, TX 77843, USA
        {\tt\small prk.tamu@gmail.com}}%
}
\begin{document}
\maketitle
\begin{abstract}
We design adaptive controller (learning rule) for a networked control system (NCS) in which data packets containing control information are transmitted across a lossy wireless channel.
We propose Upper Confidence Bounds for Networked Control Systems (UCB-NCS), a learning rule that maintains confidence intervals for the estimates of plant parameters $(A\ust,B\ust)$, and channel reliability $p\ust$, and utilizes the principle of optimism in the face of uncertainty while making control decisions.\par
We provide non-asymptotic performance guarantees for UCB-NCS by analyzing its ``regret", i.e., performance gap from the scenario when $(A\ust,B\ust,p\ust)$ are known to the controller. We show that with a high probability the regret can be upper-bounded as $\tilde{O}\left(C\sqrt{T}\right)$\footnote{Here $\tilde{O}$ hides logarithmic factors.}, where $T$ is the operating time horizon of the system, and $C$ is a problem dependent constant. 
\end{abstract}
\section{Introduction}
Though adaptive control~\cite{bellman2015adaptive} of unknown Linear Quadratic Gaussian (LQG) systems~\cite{kumar} is a well-studied topic by now~\cite{becker1985adaptive,chen1987optimal,bittanti2006adaptive,abbasi2011regret}, existing algorithms cannot be utilized for controlling an unknown NCS in which plant and network parameters are unknown. In departure from the traditional adaptive controllers for LQG systems, an algorithm now also needs to continually estimate the unknown network behaviour besides simultaneously learning and controlling the plant in an \emph{online} manner. An \emph{important} concern is that in general it is not optimal to design and operate network estimator \emph{independently} of the process controller. Thus, the optimal controls $u(t)$ should utilize the information gained about network quality in addition to using the information gained about plant parameters. Similarly, decisions made by the network scheduler should also ``aid" the controller in ``learning" the unknown plant parameters.\par 
This work addresses the problem of adaptive control of a simple NCS in which data packets from the controller to the plant, are communicated over an unreliable channel. We model the plant as a LQG system. We propose a learning rule that maintains estimates and confidence sets for both a) (unknown) plant parameters $(A\ust,B\ust)$, and also b) (unknown) channel reliability $p\ust$. Controls are then generated using the principle of optimism in face of uncertainty~\cite{lai1985asymptotically}, and depend upon both a) and b). We denote our algorithm as Upper Confidence Bounds for Networked Control Systems (UCB-NCS).\par
We show that UCB-NCS yields the same asymptotic performance as the optimal controller that has knowledge of the system and network parameters. We also quantify its finite-time performance by providing upper-bounds on its ``regret"~\cite{auer2002finite}. Regret scales as $\tilde{O}\left(C\sqrt{T}\right)$, where $T$ is the operating time horizon and $C$ is a problem dependent constant. It also depends on the channel reliability through a certain quantity which we call the ``margin of stability" $\eta$~\eqref{def:eta}. A larger value of $\eta$ means that the learning algorithm has a lower regret.\par
UCB-NCS has many appealing properties. For instance, network estimator needs to communicate only occasionally the value of its optimistic estimate of network reliability to the controller which then uses it to generate controls. 
 \section{System Model}
We assume that the system of interest is linear, and evolves as follows
\begin{align}\label{eq:system_dynamics}
x(t+1) = 
\begin{cases}
A\ust x(t) + B\ust u(t) + w(t) \mbox{ if } \ell(t) = 1 \\
A\ust x(t) + w(t) \mbox{ if } \ell(t) = 0,
\end{cases}
\end{align}  
where $A\ust\in\bR^{n\times n},B\ust\in \bR^{n\times m}$ are the system matrices, $\ell(t)\in \left\{0,1\right\}$ is the instantaneous state of the wireless channel, and $x(t)\in\bR^{n}, u(t)\in \bR^{m}$ are the system state and control input at time $t$ respectively. $\{\ell(t)\}_{t=1}^{T}$ are Bernoulli i.i.d. with mean value $p\ust$. $\{w(t)\}_{t=1}^{T}$ is the process noise, and is assumed to be i.i.d. with 
\begin{align*}
\bE \left( w(t)w^{T}(t) \right)= \sigma^{2}_{w},~\forall t\in [1,T].
\end{align*} 
The objective is to minimize the operating cost
\begin{align}\label{def:cost}
\bE  \sum_{t=1}^{T-1} x^{T}(t)Q x(t) + u^{T}(t)R u(t) + x^{T}(T)Q x(T).
\end{align}
We let $\theta\ust := \left(A\ust,B\ust,p\ust\right)$ denote the system parameters. $\theta\ust$ is not known to controller. We assume that the system is scalar, i.e., $m=n=1$.
\section{Preliminaries on Jump Markov Linear Systems}\label{sec:jmls}
Note that~\eqref{eq:system_dynamics} is a Jump Markov Linear System (JMLS), and if the system parameter $\theta\ust$ is known, the optimal controls can be obtained by using Dynamic Programming~\cite{costa2006discrete}.

There are matrices $\left\{K_{\theta\ust}(\ell)\right\}_{\ell\in \{0,1\} }$ such that the optimal control at $t$ is given by $K_{\theta\ust}(\ell(t)) x(t)$. We let $\left\{K_{\theta}(\ell)\right\}_{\ell\in \{0,1\} }$ denote the optimal matrices when system parameter is equal to $\theta$. 

We let $V_{\theta}(x,\ell)$ denote the ``cost-to-go" when system state is equal to $x$, channel state is $\ell$ and system dynamics are described by $\theta$. In fact value function is piecewise linear, and we let $\{P_{\theta}(\ell)\}_{\ell\in\{0,1\}}$ denote the corresponding matrices. We also let $J_{\theta}$ be the optimal operating cost.

\emph{Notation}: For a random variable (r.v.) $X$, let $X_{\cF}$ denote its projection onto the space of $\cF$ measurable funcions, i.e., its conditional expectation w.r.t. sigma-algebra $\cF$. For $x,y\in \bZ$\footnote{$\bZ$ denotes the set of integers.}, we let $[x,y]:= \left\{x,x+1,\ldots,y\right\}$. For a set of r.v. s $\cX$, we let $\sigma(\cX)$ denote the smallest sigma-algebra with respect to which each r.v. in $\cX$ is measurable. For functions $f(x),g(x)$, we say $f(x)=O(g(x))$ if $\lim_{x\to\infty} f(x)\slash g(x)=1$. For a set $\cX$, we let $\cX^{c}$ denote its complement.
\section{Upper Confidence Bounds for NCS (UCB-NCS)}
Let $\cF_t:= \sigma\left(\left\{ (x(s),u(s))\right\}_{s=1}^{t-1}\cup \{x(t)\} \right)$. A learning policy, or an adaptive controller is a collection of maps $\left\{\cF_t\mapsto u(t)\right\}_{t=1}^{T}$. Let $\hat{\theta}(t):= \left(\hat{A}(t),\hat{B}(t),\hat{p}(t),\right)$ denote the estimates of $\theta\ust = (A\ust,B\ust,p\ust)$ at time $t$ defined as follows. Let $z(s) := x(s+1)$, and $\lambda >0$.
\begin{align}\label{def:point_est}
&\hat{p}(t) =\sum_{s=1}^{t} \ell(s) \slash t,\notag\\
&\hat{A}(t) \in \arg\min 1\slash 2\left[ \lambda A^2 + \sum_{s=1}^{t-1} \left( z(s) -A x(s) \right)^2 (1-\ell(s))\right],\notag\\
&\hat{B}(t) \in \notag\\
&\arg\min \left[\frac{\lambda B^2}{2} + \frac{\sum_{s=1}^{t-1} \left( z(s) -\hat{A}(t) x(s) - B u(s) \right)^2 \ell(s)}{2}\right],
\end{align}
Define
\begin{align}\label{def:gamma}
 V_{1}(t):&= \lambda + \sum_{s=1}^{t-1} x^2(s)(1- \ell(s) ),V_2(t):= \lambda + \sum\limits_{s=1}^{t-1} u^2(s)\ell(s),\notag\\
\gamma_i(\delta,t)&:= \sqrt{\log\left(\lambda V_i(t)\slash \delta \right)},~i=1,2.
\end{align}
Let $\cC(t)=\left( \cC_{1}(t),\cC_{2}(t),\cC_{3}(t)\right)$ be the confidence intervals associated with the estimates $\left( \hat{A}(t),\hat{B}(t),\hat{p}(t)\right)$ at time $t$ defined as follows,  
\begin{align}\label{def:ucb_ci} 
\cC_{1}(t) :&= \left\{ A :  |A - \hat{A}(t)| \le \beta_1(t)  \right\}, \\
\cC_{2}(t) :&= \left\{ B :  |B - \hat{B}(t)| \le \beta_2(t)  \right\}, 
\notag\\
\cC_{3}(t) :&= \left\{ p:  |p -\hat{p}(t) | \le  \beta_3(t) \right\},
\end{align} 
where
\begin{align*}
\beta_1(t) : &=(\gamma_1(\delta,t)+\lambda^{1\slash 2})\slash \sqrt{V_{1}(\delta,t)}, ~\beta_3(t) : = \sqrt{\log\left(1\slash \delta\right) \slash t}\\
\beta_2(t) : &=  \frac{(\gamma_2(\delta,t)+\lambda^{1\slash 2})}{ \sqrt{V_{2}(t)}} + K_{\max}\frac{(\gamma_1(\delta,t)+\lambda^{1\slash 2})}{\sqrt{V_{1}(\delta,t)}}.
\end{align*}

The learning rule decomposes the cumulative time into episodes, and implements a single stationary controller within each single episode that chooses $u(t)$ as a function of $x(t)$. Let $\tau_k$ denote the starting time of $k$-th episode. The controller implemented within episode $k$ is obtained at time $\tau_k$ by solving the following optimization problem.
\begin{align}\label{def:ucb_problem}
\min_{\theta\in \cC(\tau_k)\cap \Theta } J_{\theta},
\end{align}
where $\Theta$ is the set of ``allowable" parameters. 
Let $\theta(\tau_k)$ denote a solution to above problem. It implements the optimal controller corresponding to the case when true system parameters are equal to $\theta(\tau_k)$. $u(t)= K_{\theta(\tau_k)}(\ell(t))x(t)$. Thus, $u(t)=K_{\theta(\tau_k)}(\ell(t))x(t)$ for $t\in\left[\tau_k,\tau_{k+1}-1\right]$.\par 
A new episode begins when either $V_1(t)$ or $V_2(t)$ doubles or the operating time spent in current episode becomes equal to length of previous episode. The learning rule also ensures that the durations of episodes are at least $L$ time-slots, i.e., $\tau_{k+1}-\tau_k \ge L$.
We set 
\begin{align*}
\theta(t):= \theta(\tau_k), \forall t\in \left[\tau_k,\tau_{k+1}-1\right],
\end{align*}
i.e., it is the current value of the UCB estimate of $\theta\ust$. UCB-NCS is summarized in Algorithm~\ref{algo:ucb}.
\begin{algorithm}
 \caption{UCB-NCS}
 \begin{algorithmic}[1]\label{algo:ucb}
 \renewcommand{\algorithmicrequire}{\textbf{Input:}}
 \REQUIRE $T,\lambda>0,\delta>0,L\in \bN,\alpha>2$\\
 Set $V^{1,\star},V^{2,\star}=\lambda,\hat{A}(1)=.5,\hat{B}(1)=.5,\hat{p}(1)=.5,\tau=1,V_1(1)=\lambda,V_2(1)=\lambda$.
  \FOR {$t=1,2,\ldots$} 
  \IF {($V_1(t)\ge 2V^{1,\star}$ or $V_2(t)\ge 2V^{2,\star}$ or $t\ge 2\tau$)
  and $t-\tau\ge L$}
  \STATE Calculate $\hat{\theta}(t)$ as in~\eqref{def:point_est} and $\theta(t)$ by solving~\eqref{def:ucb_problem}. \\
  Update $V^{1,\star}=V_1(t),V^{2,\star}=V_2(t),\tau=t$
  \ELSE 
  \STATE $\hat{\theta}(t)=\hat{\theta}(t-1)$
  \ENDIF\\
  Calculate $u(t)$ based on current UCB estimate $\theta(t)$, system state $x(t)$, and channel state $\ell(t)$. Use control $u(t)=K_{\theta(t)}(\ell(t))x(t)$.\\
  Update $V_1(t+1)=V_1(t)+x^2(t)(1-\ell(t)),V_2(t+1)=V_2(t)+u^{2}(t)\ell(t)$
  \ENDFOR
 \end{algorithmic} 
 \end{algorithm}
\section{Large Deviation Bounds on Estimation Errors}
We now analyze the estimation errors $e_1(t):=\hat{A}(t) - A, e_2(t):=\hat{B}(t) - B $. 
\begin{lemma}
Define 
\begin{align*}
\cE := \left\{ \omega: \theta\ust=\left(A\ust,B\ust,p\ust\right) \in \cC(t), ~\forall t\in [1,T]  \right\}.
\end{align*}
We then have that
\begin{align*}
\bP\left( \cE^{c}  \right) \le 3\delta.
\end{align*}
\end{lemma}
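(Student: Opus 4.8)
The plan is to reduce $\cE^{c}$ to three events by a union bound and control each separately. Since $\theta\ust\in\cC(t)$ holds exactly when $A\ust\in\cC_1(t)$, $B\ust\in\cC_2(t)$, and $p\ust\in\cC_3(t)$, we have
\begin{align*}
\cE^{c} \subseteq \bigcup_{t=1}^{T} \{A\ust \notin \cC_1(t)\} \cup \bigcup_{t=1}^{T}\{B\ust \notin \cC_2(t)\} \cup \bigcup_{t=1}^{T}\{p\ust \notin \cC_3(t)\},
\end{align*}
so it suffices to bound each of the three pieces by $\delta$. The $p$-term is easiest: $\hat{p}(t)-p\ust = \tfrac1t\sum_{s=1}^{t}(\ell(s)-p\ust)$ is a normalized sum of bounded, i.i.d., centered increments, so a time-uniform Hoeffding/Azuma bound (via Ville's inequality applied to the exponential supermartingale) gives $\bP(\exists t\le T:\ |\hat{p}(t)-p\ust|>\beta_3(t))\le\delta$ with $\beta_3(t)=\sqrt{\log(1/\delta)/t}$.

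For the $A$-term I would first write the regularized least-squares estimate of~\eqref{def:point_est} in closed form. Setting the derivative of the objective to zero gives $\hat{A}(t)=V_1(t)^{-1}\sum_{s=1}^{t-1}x(s)z(s)(1-\ell(s))$, and substituting $z(s)=A\ust x(s)+w(s)$ on $\{\ell(s)=0\}$ yields the error decomposition
\begin{align*}
e_1(t):=\hat{A}(t)-A\ust=\frac{S_1(t)-\lambda A\ust}{V_1(t)},\qquad S_1(t):=\sum_{s=1}^{t-1}x(s)w(s)(1-\ell(s)).
\end{align*}
Here $\{x(s)(1-\ell(s))\}$ is predictable with respect to $\cF_s$ and $w(s)$ is conditionally mean-zero and sub-Gaussian, so $S_1(t)$ is a scalar self-normalized martingale. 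Applying the self-normalized tail inequality (cf.~\cite{abbasi2011regret}) bounds $|S_1(t)|\le\sqrt{V_1(t)}\,\gamma_1(\delta,t)$ simultaneously for all $t$ with probability at least $1-\delta$; combining with the bias bound $\lambda|A\ust|/V_1(t)\le\lambda^{1/2}/\sqrt{V_1(t)}$ (using $V_1(t)\ge\lambda$ and boundedness of $A\ust$ enforced by $\Theta$) gives $|e_1(t)|\le\beta_1(t)$, i.e. $A\ust\in\cC_1(t)$ for all $t$.

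The $B$-term is analogous but carries an extra coupling through the plug-in estimate $\hat{A}(t)$. The same first-order condition gives $\hat{B}(t)=V_2(t)^{-1}\sum_{s<t}(z(s)-\hat{A}(t)x(s))u(s)\ell(s)$, and substituting $z(s)=A\ust x(s)+B\ust u(s)+w(s)$ on $\{\ell(s)=1\}$ produces
\begin{align*}
e_2(t)=\frac{S_2(t)-\lambda B\ust}{V_2(t)}-e_1(t)\,\frac{\sum_{s=1}^{t-1}x(s)u(s)\ell(s)}{V_2(t)},\qquad S_2(t):=\sum_{s=1}^{t-1}u(s)w(s)\ell(s).
\end{align*}
The first term is handled exactly as for $A$, contributing $(\gamma_2(\delta,t)+\lambda^{1/2})/\sqrt{V_2(t)}$. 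For the cross term I would use the control law $u(s)=K_{\theta(s)}(1)x(s)$ on $\{\ell(s)=1\}$ to write $x(s)u(s)=u^2(s)/K_{\theta(s)}(1)$; the resulting ratio $\big(\sum_{s<t}u^2(s)\ell(s)/K_{\theta(s)}(1)\big)/V_2(t)$ is a nonnegative-weighted average of the inverse gains $1/K_{\theta(s)}(1)$ (the regularizer $\lambda$ only shrinks it), hence bounded in absolute value by the uniform gain constant $K_{\max}$. Multiplying by the already-established bound $|e_1(t)|\le\beta_1(t)$ yields the second summand $K_{\max}\beta_1(t)$ of $\beta_2(t)$, so $|e_2(t)|\le\beta_2(t)$, i.e. $B\ust\in\cC_2(t)$, on the intersection of the $A$- and $B$-good events. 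A union bound over the three failure probabilities then gives $\bP(\cE^{c})\le 3\delta$.

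I expect the crux to be the self-normalized concentration step together with the measurability bookkeeping it requires: one must verify that $x(s)(1-\ell(s))$ and $u(s)\ell(s)$ are predictable and that $w(s)$ is conditionally sub-Gaussian under the \emph{closed-loop} filtration $\cF_s$ (recall that $u(s)$ itself depends on $\ell(s)$ and $\theta(s)$), so that the time-uniform martingale bound applies to both $S_1$ and $S_2$. The second delicate point is the error propagation into $\hat{B}$: confirming that the closed-loop relation $u(s)=K_{\theta(s)}(1)x(s)$ makes the cross-term ratio genuinely dominated by $K_{\max}$, and that the constant $K_{\max}$ appearing in $\beta_2(t)$ is exactly the intended uniform bound on the relevant (inverse) controller gains.
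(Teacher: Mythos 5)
Your proposal follows essentially the same route as the paper's proof: the identical error decompositions for $e_1(t)$ and $e_2(t)$ (cf.~\eqref{def:hat_a} and~\eqref{def:hat_b}), time-uniform self-normalized martingale bounds \`a la Abbasi-Yadkori et al.\ for the $A$- and $B$-terms, Azuma--Hoeffding for $\hat{p}(t)$, and a union bound over the three $\delta$-events. The only difference is one of explicitness: where the paper simply asserts that the cross term is bounded by $K_{\max}|e_1(t)|$, you correctly flag that this requires the ratio $\bigl(\sum_{s<t} x(s)u(s)\ell(s)\bigr)\slash V_2(t)$ to be dominated by $K_{\max}$, which under $u(s)=K_{\theta(s)}(1)x(s)$ amounts to reading $K_{\max}$ as a uniform bound on the \emph{inverse} gains $1\slash |K_{\theta(s)}(1)|$ --- a caveat the paper glosses over --- and you likewise make explicit the filtration enlargement needed because $u(s)$ depends on $\ell(s)$.
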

\begin{proof}
It can be shown that
\begin{align}\label{def:hat_a}
e_1(t)= -\lambda A\slash V_1(t) + \sum\limits_{s=1}^{t-1} w(s)x(s)(1- \ell(s) )\slash V_1(t).
\end{align}
Note that $\{w(s)\}_{s=1}^{T-1}$ is a martingale difference sequence w.r.t. $\cF_t$, while $x(t)$ is adapted to $\cF_t$. Thus, bound on $e_1(t)$ follows by using self-normalized bounds on martingales from Corollary 1 of~\cite{abbasi2011online}.    

To analyze $e_2(t)$, we observe,
\begin{align}\label{def:hat_b}
&e_2(t) = \left(\sum\limits_{s=1}^{t-1} w(s)u(s)\ell(s)\slash V_2(t) - \lambda B\slash V_2(t) \right)\notag\\
&+[A-\hat{A}(t)] \sum\limits_{s=1}^{t-1}x(s)u(s) \ell(s)\slash V_2(t).
\end{align}
The first term within braces is bounded using Corollary 2 of~\cite{abbasi2011online}. To bound the second term, we observe that it is upper-bounded by $K_{\max} |e_1(t)|$. We then use bounds on $e_1(t)$ to bound it. Bound on estimation error of $p\ust$ is obtained using Azuma-Hoeffding inequality.
\end{proof}

\section{Large Deviation Bounds on the System State $|x(t)|$}
We now bound $|x(t)|$ under UCB-NCS. System evolution under UCB-NCS is given by
\begin{align*}
x(t+1) = A_{sw}(t) x(t)+ w(t), t\in [1,T-1],
\end{align*}
where 
\begin{align*}
A_{sw}(t):= \left[ \left( A\ust + B\ust K_{\tht}(\ell(t)) \right) \ell(t)  + A\ust (1-\ell(t)) \right].
\end{align*}
Thus,
\begin{align}\label{eq:sw_sys_dyn}
x(t) = x(0)G(0,t) + \sum_{s=1}^{t-1} w(s)G(s,t-1),
\end{align}
where
\begin{align*}
G(s_1,s_2):= 
\begin{cases}
\prod\limits_{\ell=s_1}^{s_2} A_{sw}(\ell) \mbox{ if } s_2 > s_1,\\
1 \mbox{ if } s_1 = s_2.
\end{cases} 
 \end{align*}
Consider the deviations 
\begin{align*}
\Delta(t_1,t_2):= \sum\limits_{s=t_1}^{t_2}\ell(s) - p\ust(t_2-t_1),
\end{align*}
and the events,
\begin{align}\label{ineq:ah}
\cJ_{t_1,t_2} :=  \left\{ \omega: |\Delta(t_1,t_2)| \le \sqrt{2\alpha\sigma^2_{p\ust} (t_2-t_1)\log(t_2-t_1)}  \right\},
\end{align}
where $\sigma^2_{p\ust}:= p\ust(1-p\ust)$, and $\alpha>2$. It follows from Azuma-Hoeffding inequality that
\begin{align}
\bP\left( \cJ^{c}_{t_1,t_2}\right) \le \frac{1}{(t_2-t_1)^{\alpha}},~ \forall t_1,t_2 \in [1,T].\label{ineq:6}
\end{align}
Fix a sufficiently large $L>0$\footnote{It suffices to let $L>\left(2\alpha \sigma^2_{p\ust}\slash \epsilon^{2}\right)^{2}$}, and define
\begin{align}\label{def:J}
\cJ := \cap_{t_1,t_2: t_2 \ge t_1 + L}~\cJ_{t_1,t_2}. 
\end{align}
The following result by combining union bound with the bound~\eqref{ineq:6}.
\begin{lemma}\label{lemma:p_j}
\begin{align*}
\bP\left(\cJ^{c} \right) \le T^{2}\slash L^{\alpha}.
\end{align*} 
\end{lemma}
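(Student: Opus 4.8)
The plan is to obtain the bound by a direct union bound argument, exactly as the authors hint. Since $\cJ$ is defined in~\eqref{def:J} as the intersection of the events $\cJ_{t_1,t_2}$ over all pairs $(t_1,t_2)$ with $t_1,t_2\in[1,T]$ and $t_2\ge t_1+L$, De Morgan's law immediately gives that $\cJ^{c}$ is the corresponding \emph{union} of the complementary events, namely $\cJ^{c}=\cup_{t_1,t_2:\,t_2\ge t_1+L}\cJ^{c}_{t_1,t_2}$. The whole proof then reduces to controlling the probability of this union.

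First I would apply the union (Boole) inequality to write
\begin{align*}
\bP\left(\cJ^{c}\right) \le \sum_{t_1,t_2:\,t_2\ge t_1+L} \bP\left(\cJ^{c}_{t_1,t_2}\right).
\end{align*}
Next I would substitute the per-pair tail bound~\eqref{ineq:6}, which gives $\bP(\cJ^{c}_{t_1,t_2})\le (t_2-t_1)^{-\alpha}$ for each admissible pair. The crucial observation is that every term appearing in the sum satisfies the constraint $t_2-t_1\ge L$, so each summand may be bounded \emph{uniformly} by its worst case, $(t_2-t_1)^{-\alpha}\le L^{-\alpha}$. Finally I would count the admissible pairs: since $t_1$ and $t_2$ each range over a set of size at most $T$, the number of pairs is at most $T^{2}$, and hence
\begin{align*}
\bP\left(\cJ^{c}\right) \le \sum_{t_1,t_2:\,t_2\ge t_1+L} \frac{1}{(t_2-t_1)^{\alpha}} \le T^{2}\cdot \frac{1}{L^{\alpha}} = \frac{T^{2}}{L^{\alpha}},
\end{align*}
which is the claimed inequality.

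Honestly, I do not expect a genuine obstacle here: the statement is a routine consequence of combining Boole's inequality with~\eqref{ineq:6}, and the only thing to get right is the bookkeeping. The one point worth a moment of care is that the uniform bound $(t_2-t_1)^{-\alpha}\le L^{-\alpha}$ is legitimate precisely because the intersection in~\eqref{def:J} is taken only over pairs with $t_2-t_1\ge L$; the short episodes are excluded by construction, which is exactly what makes the per-term bound harmless. A slightly sharper estimate could be had by summing the convergent tail $\sum_{d\ge L} d^{-\alpha}$ for $\alpha>2$ rather than using the crude count $T^{2}$, but since the $T^{2}\slash L^{\alpha}$ form suffices for the subsequent regret analysis (where $L$ is chosen to grow with $T$), the cruder bound is preferable for its simplicity.
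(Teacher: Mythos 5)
Your proof is correct and follows exactly the paper's argument: the paper dispatches this lemma in one line by combining the union bound over the at most $T^{2}$ pairs $(t_1,t_2)$ with the Azuma--Hoeffding tail bound~\eqref{ineq:6}, using $t_2-t_1\ge L$ to bound each term by $L^{-\alpha}$, which is precisely your bookkeeping. Your closing observation that the constraint $t_2-t_1\ge L$ in~\eqref{def:J} is what legitimizes the uniform per-term bound is also the right point of care, so nothing further is needed.
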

We now focus on upper-bounding $|G(s,t)|$ on $\cJ$.\par
Throughout, we assume that the true system parameter $\theta\ust$, and the set $\Theta$ used by UCB-NCS, satisfy the following.
\begin{assumption}\label{assum:clg}
Define
\begin{align*}
\Lambda(\theta):= \bE\left( \log A_{sw}(t) | \theta(t) = \theta\right).
\end{align*}
Let $\epsilon>0,\eta>0$. Then,
\begin{align}\label{def:eta}
 \Lambda(\theta) < -\eta-\epsilon < 0,~\forall \theta \in \Theta.
\end{align}
We call $\eta$ as the ``margin of stability" of the NCS. Note that $\eta$ depends upon a) $\Theta$, b) $(A\ust,B\ust,p\ust)$.
\end{assumption}
Consider an element of $\cJ$, and assume there are $k$ episodes during the time period $[s,t]$. Let $N_{i,k}, i=0,1$ denote the number of times channel state assumes value $i$, and let $\theta_k$ denote the UCB estimate of $\theta\ust$  during the $k$-th episode. Let $D_k$ denote the duration of $k$-th episode. We have the following,
\begin{align}\label{bound:J}
&|G(s,t)| = \prod_{m=s}^{t} A_{sw}(\ell)\notag\\
&\le \prod_{k=1}^{K} \exp\left( D_k\Lambda(\theta_k) \right) \exp\left( \sqrt{2\alpha\sigma^2_{p\ust}D_k \log D_k}  \right) \notag \\
&\le \exp\left(-\eta (t-s)  \right),
\end{align}
where the first inequality follows from definition of $\cJ$~\eqref{def:J}, while the second follows from Assumption~\ref{assum:clg}.

Let
\begin{align*}
\cH := \left\{\omega: \max_{t\in [1,T] }|w(t)| \le \log^{1\slash 2}\left(T\slash \delta \right)   \right\}.
\end{align*}
Following is easily proved.
\begin{lemma}\label{lemma:bound_noise}
We have
\begin{align*}
\bP\left( \cH^{c} \right) \le \delta.
\end{align*}
\end{lemma}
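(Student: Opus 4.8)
The plan is to control $\bP(\cH^c)$ by a union bound over the $T$ time slots, handling each single-slot tail with a Gaussian concentration inequality. Since the plant is modeled as an LQG system, the process noise $w(t)$ is zero-mean Gaussian with variance $\sigma^2_w$ and the samples are i.i.d.; hence $w(t)/\sigma_w$ is standard normal and obeys the elementary two-sided tail bound $\bP\left( |w(t)| \ge x \right) \le 2\exp\left( -x^2/(2\sigma^2_w) \right)$.

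First I would rewrite the complementary event as a finite union, $\cH^c = \bigcup_{t\in[1,T]} \left\{ |w(t)| > \log^{1/2}(T/\delta) \right\}$, and apply the union bound to obtain $\bP(\cH^c) \le \sum_{t=1}^{T} \bP\left( |w(t)| > \log^{1/2}(T/\delta) \right)$. This reduces the problem to a single deviation bound for a fixed $t$, after which the i.i.d.\ assumption makes every summand identical.

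Next, for each fixed $t$ I would substitute the threshold $x = \log^{1/2}(T/\delta)$ into the Gaussian tail bound, giving $\bP\left( |w(t)| > \log^{1/2}(T/\delta) \right) \le 2\exp\left( -\log(T/\delta)/(2\sigma^2_w) \right) = 2(\delta/T)^{1/(2\sigma^2_w)}$. Summing over the $T$ slots yields $\bP(\cH^c) \le 2T(\delta/T)^{1/(2\sigma^2_w)}$, which collapses to the claimed bound $\delta$ once the noise variance is normalized (e.g.\ $\sigma^2_w \le 1/2$), so that the exponent dominates the linear factor $T$ and absorbs the factor of $2$ for $T/\delta$ large.

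The only difficulty here is bookkeeping rather than anything conceptual, which is consistent with the paper's remark that the result is \emph{easily proved}: one must check that the factor of $2$ from the two-sided tail and the exponent $1/(2\sigma^2_w)$ combine to leave $\delta$ under the stated threshold, since that threshold is tuned to a particular normalization of $\sigma^2_w$. If one prefers to avoid any implicit normalization, the cleanest route is to take the threshold to be $\sqrt{2\sigma^2_w \log(2T/\delta)}$, for which the per-slot tail is exactly $2\exp\left( -\log(2T/\delta) \right) = \delta/T$, and the union bound then delivers $\bP(\cH^c) \le \delta$ directly. Either way the argument is a one-line union bound over a Gaussian tail estimate.
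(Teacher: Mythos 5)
Your proof is correct and is precisely the standard argument the paper leaves implicit (the paper offers no proof beyond declaring the lemma ``easily proved''): a union bound over the $T$ slots combined with a Gaussian tail estimate at threshold $\log^{1/2}(T/\delta)$. You are also right to flag the normalization caveat --- since the paper formally assumes only $\bE\left(w(t)w^{T}(t)\right)=\sigma^{2}_{w}$, the stated threshold delivers $\bP\left(\cH^{c}\right)\le\delta$ only for (sub)Gaussian noise with suitably normalized variance (e.g.\ $\sigma^{2}_{w}\le 1/2$, up to the factor of $2$ you absorb for large $T/\delta$), and your adjusted threshold $\sqrt{2\sigma^{2}_{w}\log(2T/\delta)}$ is the clean, normalization-free fix.
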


\begin{lemma}\label{lemma:bounded_xt}
Define 
\begin{align}\label{def:upper_bound_expr}
g(\delta,T) :=|x(0)| + \log^{1\slash 2}\left(T\slash\delta
\right)\slash (1-\exp(-\eta)).
\end{align}
Under Assumption~\ref{assum:clg}, we have the following on $\cH \cap \cJ $
\begin{align*}
|x(t)| < g(\delta,T),~\forall t\in [1,T].
\end{align*}
Note that we have suppressed dependence of function $g$ upon $\eta,x(0)$.
\end{lemma}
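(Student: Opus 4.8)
The plan is to feed the two large-deviation bounds already in hand---one on the transition products $G(s,t)$ and one on the noise---into the closed-form solution \eqref{eq:sw_sys_dyn} and then collapse the resulting sum with a geometric series. First I would fix a sample point $\omega \in \cH \cap \cJ$ and apply the triangle inequality to \eqref{eq:sw_sys_dyn}, obtaining $|x(t)| \le |x(0)|\,|G(0,t)| + \sum_{s=1}^{t-1} |w(s)|\,|G(s,t-1)|$. The whole argument is then a matter of estimating each factor uniformly and summing.

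Next I would invoke the exponential decay bound \eqref{bound:J}, which on $\cJ$ gives $|G(s_1,s_2)| \le \exp(-\eta(s_2-s_1))$ for the relevant index pairs. In particular $|G(0,t)| \le \exp(-\eta t) \le 1$, so the initial-condition term contributes at most $|x(0)|$, while $|G(s,t-1)| \le \exp(-\eta(t-1-s))$. On $\cH$, Lemma~\ref{lemma:bound_noise} controls the noise uniformly through $|w(s)| \le \log^{1/2}(T/\delta)$. Substituting both bounds, the noise contribution is at most $\log^{1/2}(T/\delta)\sum_{s=1}^{t-1}\exp(-\eta(t-1-s))$. Re-indexing by $j=t-1-s$ turns this into $\sum_{j=0}^{t-2}\exp(-\eta j)$, which is strictly dominated by the convergent geometric series $\sum_{j=0}^{\infty}\exp(-\eta j)=1/(1-\exp(-\eta))$; here $\eta>0$ from Assumption~\ref{assum:clg} is exactly what guarantees $\exp(-\eta)<1$ and hence summability. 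Combining the two contributions yields $|x(t)| < |x(0)| + \log^{1/2}(T/\delta)/(1-\exp(-\eta)) = g(\delta,T)$, uniformly in $t\in[1,T]$, which is the claim; the strict inequality is free, since truncating a series of positive terms at a finite index makes the partial sum strictly smaller than the infinite sum.

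The main obstacle is not the arithmetic but justifying that the decay bound \eqref{bound:J} applies to \emph{every} product $G(s,t-1)$ entering the sum. That bound was derived by decomposing $[s,t]$ into episodes and invoking $\cJ$, whose definition in \eqref{def:J} only controls the deviations $\Delta(t_1,t_2)$ over intervals of length at least $L$. I would therefore need to confirm that the episode decomposition underlying \eqref{bound:J} respects the minimum-length guarantee $\tau_{k+1}-\tau_k \ge L$ enforced by the algorithm, and to verify that any short leading or trailing fragment of $[s,t-1]$ can be absorbed into the constant without spoiling the uniform exponential rate $\eta$. Once this bookkeeping is settled, the geometric-sum estimate of the preceding paragraph closes the argument.
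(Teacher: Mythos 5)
Your proposal is correct and follows essentially the same route as the paper, whose proof is precisely the one-line substitution of the decay bound \eqref{bound:J} on $|G(s,t)|$ and the noise bound $\log^{1/2}(T/\delta)$ on $\cH$ into the closed-form solution \eqref{eq:sw_sys_dyn}, followed by the geometric-series estimate you spell out. The short-interval bookkeeping you flag in your last paragraph (that $\cJ$ only controls deviations over intervals of length at least $L$) is a genuine subtlety, but the paper itself glosses over it in exactly the way you propose to resolve it, relying on the algorithm's guarantee $\tau_{k+1}-\tau_k \ge L$.
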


\begin{proof}
The proof follows by substituting in~\eqref{eq:sw_sys_dyn} the bound~\eqref{bound:J} on $|G(s,t)|$ and the bound $\log^{1\slash 2}\left(\frac{T}{\delta}\right)$ on $|w(s)|$ on the set $\cH$.
\end{proof}
\section{Regret Analysis of UCB-NCS}
Define $R(T)$, the regret incurred by UCB-NCS until time $T$ as follows
\begin{align}\label{def:regret}
R(T) :&= \sum_{t=1}^{T}  c(t) - T J_{\theta\ust},\notag\\
\mbox{ where } c(t) :&= Qx^2(t) + Ru^2(t).
\end{align}
For $\theta = (A,B,p)$, define
\begin{align*}
x_{\theta}(t+1;u ) = A x(t) + B u + w(t).
\end{align*}
Similarly, let $\{\ell_{\theta}(t)\}_{t=1}^{T}$ be drawn i.i.d. according to $\theta$. 
\begin{lemma}\label{lemma:bellman_regret}
On the set $\cE$, $R(T)$ can be upper-bounded as follows,
\begin{align*}
R(T) \le R_1 + R_2,
\end{align*}
where,
\begin{align*}
R_1 :&= \sum_{t=1}^{T-1} V\tht(x_{\theta\ust}(t+1;u(t)),\ell\ust(t+1))_{\cF_t} \\
&-  V\tht(x(t),\ell(t)) \\
R_2 :&= \sum_{t=1}^{T-1} V\tht(x\tht(t+1;u(t)),\ell\tht(t+1))_{\cF_t}\\
& -
V\tht(x_{\theta\ust}(t+1;u(t)),\ell\ust(t+1))_{\cF_t}.
\end{align*}
\end{lemma}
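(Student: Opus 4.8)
The plan is to combine the principle of optimism with the stationary (average-cost) Bellman equation for the JMLS, and then to split the resulting one-step value differences by inserting the one-step value computed under the true dynamics; I would condition throughout on $\cE$, on which $\theta\ust\in\cC(t)\cap\Theta$ for all $t$. The first move is to dispose of the $J$ terms by optimism: since $\theta(t)=\theta(\tau_k)$ solves the optimistic program \eqref{def:ucb_problem} over $\cC(\tau_k)\cap\Theta$, and on $\cE$ the true parameter $\theta\ust$ is feasible for it, we obtain $J\tht=\min_{\theta\in\cC(\tau_k)\cap\Theta}J_\theta\le J_{\theta\ust}$ for every $t$. Summing and using $J_{\theta\ust}\ge 0$ gives $\sum_t J\tht\le T J_{\theta\ust}$, so it suffices to upper bound $\sum_t\left(c(t)-J\tht\right)$.

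Next I would invoke the Bellman equation for the frozen parameter $\theta(t)$. On episode $k$ the rule plays $u(t)=K\tht(\ell(t))x(t)$, which is by construction the minimizer of the dynamic programming operator associated with $\theta(t)$; hence the optimality equation ties together the realized stage cost $c(t)$, the optimal cost $J\tht$, the current relative value $V\tht(x(t),\ell(t))$, and the one-step expected value $V\tht(x\tht(t+1;u(t)),\ell\tht(t+1))_{\cF_t}$ taken under the \emph{estimated} dynamics $\theta(t)$. Rearranging this identity writes $c(t)-J\tht$ as the difference between the estimated-dynamics one-step value and the current value.

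The key algebraic step is then to add and subtract the one-step value under the \emph{true} dynamics, namely $V\tht(x_{\theta\ust}(t+1;u(t)),\ell\ust(t+1))_{\cF_t}$. Grouping the current value against the true-dynamics value reproduces the summand of $R_1$, while grouping the true-dynamics value against the estimated-dynamics value reproduces the summand of $R_2$; the inserted true-dynamics value enters the two groups with opposite signs and cancels, so that $\sum_t\left(c(t)-J\tht\right)=R_1+R_2$. Together with the optimism inequality this yields $R(T)\le R_1+R_2$ on $\cE$, as claimed.

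I expect the main obstacle to be the careful justification of the Bellman step for the \emph{episodic}, non-stationary policy rather than the algebra itself: I must confirm that on each episode the played feedback gain is exactly the DP-optimal control for the frozen $\theta(\tau_k)$ so that the optimality equation holds with equality, and I must arrange the $_{\cF_t}$-conditioning (over the noise $w(t)$ and the fresh channel draw $\ell(t+1)$) so that the inserted true-dynamics expectation is precisely the intermediate quantity that cancels. A secondary, lower-order issue is accounting for the terminal stage cost $c(T)$ and the relative-value contributions at episode boundaries, which can be folded into $R_1$.
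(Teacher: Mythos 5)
Your proposal is correct and follows essentially the same route as the paper's proof: optimism on $\cE$ gives $J\tht \le J_{\theta\ust}$, the Bellman optimality equation for the frozen parameter $\theta(t)$ holds with $u(t)=K\tht(\ell(t))x(t)$ attaining the minimum, and adding and subtracting the true-dynamics one-step value $V\tht(x_{\theta\ust}(t+1;u(t)),\ell\ust(t+1))_{\cF_t}$ produces exactly the $R_1+R_2$ decomposition. Your closing caveats about episode boundaries and the terminal stage match (and if anything exceed) the level of care in the paper's own argument, which likewise sums the rearranged identity from $t=1$ to $T-1$ without dwelling on those boundary terms.
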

\begin{proof}
Consider the Bellman optimality equation at time $t$ when the true system parameter is assumed equal to $\theta(t)$,
\begin{align}\label{eq:1}
& J_{\theta(t)} + V\tht(x(t),\ell(t))= Qx^2(t)\notag\\
&+ \min_{u\in \bR} \left[Ru^2+ V\tht(x\tht(t+1;u),\ell\tht(t+1))_{\cF_t}  \right]\notag \\
&= Qx^2(t) + Ru^2(t) + V\tht(x_{\theta\ust}(t+1;u(t)),\ell\ust(t+1))_{\cF_t} \notag \\ 
&+ V\tht(x\tht(t+1;u(t)),\ell\tht(t+1))_{\cF_t}\notag\\
& -
V\tht(x_{\theta\ust}(t+1;u(t)),\ell\ust(t+1))_{\cF_t}
\end{align}
where the second equality follows since the learning rule applies controls by assuming that $\theta(t)$ is the true system parameter. Note that on $\cE$, $J_{\theta(t)}$ serves as a lower bound on the optimal cost $J_{\theta\ust}$ , so that $\sum\limits_{t=0}^{T} \left(Qx^2(t) + Ru^2(t) \right)- \sum\limits_{t=0}^{T} J_{\theta(t)}$ serves as an upper-bound on $R(T)$. Proof is completed by re-arranging the terms in~\eqref{eq:1}, and summing them from $t=1$ to $t=T-1$.
\end{proof}
We now bound the terms $R_1,R_2$ on $\cE$.
\subsection{Bounding $R_1$}
We decompose $R_1$ as follows, $R_1 = \cT_1 + \cT_2$, where,
\begin{align*}
\cT_1 :&= \sum_{t=1}^{T-1} V_{\theta(t-1)}(x_{\theta\ust}(t;u(t-1)),\ell\ust(t))_{\cF_{t-1}} \\
&\qquad\qquad -  V_{\theta(t)}(x(t),\ell(t)), \\
\cT_2 :&= V_{\theta(T-1)}(x_{\theta\ust}(T;u(T-1)),\ell\ust(T))_{\cF_{T-1}} \\
&\qquad\qquad -  V_{\theta(1)}(x(1),\ell(1)).
\end{align*}
We further decompose $\cT_1$ as follows, 
\begin{align*}
\cT_1 = \cT_3 + \cT_4,
\end{align*}
where,
\begin{align*}
\cT_3 :&= \sum_{t=1}^{T-1} V_{\theta(t-1)}(x_{\theta\ust}(t;u(t-1)),\ell\ust(t))_{\cF_{t-1}} \\
&\qquad \qquad \qquad \qquad \qquad-  V_{\theta(t-1)}(x(t),\ell(t)) \\
\cT_4 :&= \sum_{t=1}^{T-1} V_{\theta(t)}(x(t),\ell(t)) - V_{\theta(t-1)}(x(t),\ell(t)).
\end{align*}
\begin{lemma}\label{lemma:stopped_mtgle}
\begin{align*}
\bP\left( \cT_3 > \sqrt{T g(\delta,T)\log\left(T\slash\delta\right)}   \right) \le \delta + \bP\left(  \left[\cH \cap \cJ\right]^{c} \right),
\end{align*}
where $g(\delta,T)$ is as in~\eqref{def:upper_bound_expr}.
\end{lemma}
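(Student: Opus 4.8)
The plan is to recognize $\cT_3$ as a zero-mean martingale adapted to $\{\cF_t\}$ and then to establish its tail bound through a \emph{stopped}-martingale version of the Azuma--Hoeffding inequality. First I would observe that, because the true plant evolves under $\theta\ust$ and the applied control is $u(t-1)$, the realized state and channel satisfy $x(t)=x_{\theta\ust}(t;u(t-1))$ and $\ell(t)=\ell\ust(t)$, while the episode estimate $\theta(t-1)$ is $\cF_{t-1}$-measurable. Hence the $t$-th summand of $\cT_3$ is exactly $\bE\!\left[\,V_{\theta(t-1)}(x(t),\ell(t))\mid\cF_{t-1}\right]-V_{\theta(t-1)}(x(t),\ell(t))$, which is a martingale-difference increment (up to sign). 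Writing $M_t$ for this increment, $\cT_3=\sum_{t=1}^{T-1}M_t$ is a martingale started at $0$, and it remains to control its increments and invoke concentration.

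The second step is to bound $|M_t|$ on the favorable event $\cH\cap\cJ$. Since the value function $V_\theta(\cdot,\ell)$ is governed by the coefficient $P_\theta(\ell)$, uniformly bounded over the allowable parameter set, the quantity $V_{\theta(t-1)}(x(t),\ell(t))$ and its conditional expectation are both controlled by a factor depending on $g(\delta,T)$ once Lemma~\ref{lemma:bounded_xt} supplies $|x(t)|<g(\delta,T)$ for every $t$ on $\cH\cap\cJ$. Summing the squared increments over the horizon then produces a predictable quadratic variation of order $T$ times a $g(\delta,T)$-dependent factor, which is exactly the scale appearing inside the square root of the claimed bound $\sqrt{T g(\delta,T)\log(T/\delta)}$.

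The delicate point — and the reason the statement carries the additive term $\bP([\cH\cap\cJ]^c)$ — is that the increment bound holds only on $\cH\cap\cJ$ and not almost surely, so plain Azuma--Hoeffding cannot be applied to $\cT_3$ directly. To get around this I would pass to a stopped martingale: let $\tau:=\min\{t:|x(t)|\ge g(\delta,T)\}$, a stopping time since $x(t)$ is $\cF_t$-measurable, and consider $\cT_3^{\wedge\tau}$, the martingale stopped at $\tau$. Its nonzero increments are supported on $\{|x(t-1)|<g(\delta,T)\}$ and, after additionally invoking the one-step noise bound carried by $\cH$, are bounded almost surely, so Azuma--Hoeffding applied to $\cT_3^{\wedge\tau}$ gives $\bP\!\left(\cT_3^{\wedge\tau}>\sqrt{T g(\delta,T)\log(T/\delta)}\right)\le\delta$.

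Finally I would stitch the pieces together. On $\cH\cap\cJ$ Lemma~\ref{lemma:bounded_xt} forces $\tau>T$, so $\cT_3^{\wedge\tau}$ and $\cT_3$ agree over the entire horizon there; consequently, with $b=\sqrt{T g(\delta,T)\log(T/\delta)}$ we have $\{\cT_3>b\}\cap(\cH\cap\cJ)\subseteq\{\cT_3^{\wedge\tau}>b\}$. A union bound then yields $\bP(\cT_3>b)\le\bP(\cT_3^{\wedge\tau}>b)+\bP([\cH\cap\cJ]^c)\le\delta+\bP([\cH\cap\cJ]^c)$, which is the assertion. The main obstacle I anticipate is this third step: cleanly constructing a stopped (and, if needed, noise-truncated) martingale whose increments are bounded almost surely while still coinciding with $\cT_3$ on $\cH\cap\cJ$, since the realized one-step term $V_{\theta(t-1)}(x(t),\ell(t))$ inherits the unbounded process noise $w(t-1)$ and is tamed only after intersecting with $\cH$.
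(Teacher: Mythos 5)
Your overall strategy coincides with the paper's: view $\cT_3$ as a martingale adapted to $\{\cF_t\}$, use Lemma~\ref{lemma:bounded_xt} to bound its increments through $g(\delta,T)$ on $\cH\cap\cJ$, and pay the additive $\bP\left(\left[\cH\cap\cJ\right]^{c}\right)$ for restricting to that event. The paper closes the argument in one line by invoking Proposition~34 of~\cite{tao2011random}, which is precisely an Azuma--Hoeffding inequality for martingales whose increments are bounded only outside an exceptional event; your proposal in effect tries to reprove that proposition by hand via stopping, and this is where the gap lies. The stopping-time device cannot work as stated: the increment $M_t$ of $\cT_3$ contains $x(t)=A_{sw}(t-1)x(t-1)+w(t-1)$, hence the one-step noise $w(t-1)$, while any stopping time $\tau$ whose indicator $\mathbbm{1}\{\tau\ge t\}$ is $\cF_{t-1}$-measurable (as required for $\cT_3^{\wedge\tau}$ to remain a martingale) can only constrain the trajectory up to time $t-1$. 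Consequently $M_t\mathbbm{1}\{\tau\ge t\}$ is still unbounded --- indeed, since $V_{\theta}$ is quadratic in the state, the increment is conditionally sub-exponential rather than bounded --- so plain Azuma--Hoeffding does not apply to the stopped process. You flag exactly this obstacle in your final paragraph, but flagging it is not resolving it; as written, your third step is incomplete, and no choice of stopping time alone can repair it.

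The standard repair, and the actual content of the result the paper cites, is conditional truncation with recentering rather than stopping: replace $M_t$ by $\tilde{M}_t := M_t\mathbbm{1}\{|w(t-1)|\le \log^{1\slash 2}(T\slash\delta)\} - \bE\left[M_t\mathbbm{1}\{|w(t-1)|\le \log^{1\slash 2}(T\slash\delta)\}\mid\cF_{t-1}\right]$, combined if desired with your stopping time for the state. This yields a martingale whose increments are almost surely bounded on $\{\tau\ge t\}$; it differs from $M_t$ only by the drift $\bE\left[M_t\mathbbm{1}\{|w(t-1)|> \log^{1\slash 2}(T\slash\delta)\}\mid\cF_{t-1}\right]$, whose cumulative contribution is negligible by the tail decay of $w$; and on $\cH$ every truncation indicator equals one, so $\sum_t \tilde{M}_t$ and $\cT_3$ agree there up to that drift. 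With this substitution your union-bound conclusion goes through and recovers the lemma. One smaller remark: your quadratic-variation bookkeeping (increments of order $g(\delta,T)$, deviation $\sqrt{Tg(\delta,T)\log(T\slash\delta)}$) is no more precise than the paper's own, which asserts increments are $O(|x(t)|)$ even though $V_{\theta}$ is quadratic in $x$; neither affects the $\tilde{O}(\sqrt{T})$ regret scaling, but the stated constant is loose in both treatments.
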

\begin{proof}
$\cT_3$ is a martingale, though its increments are not bounded. However, its increments are upper-bounded as $O\left(|x(t)|\right)$. 
It follows from Lemma~\ref{lemma:bounded_xt} that its increments are upper-bounded as $O\left(g(\delta,T)\right)$ on $\cH \cap \cJ$. The proof then follows from Proposition 34 of~\cite{tao2011random}. 
\end{proof}
Henceforth denote
\begin{align*}
\cG:= \left\{\omega:  \cT_3 < \sqrt{T g(\delta,T)\log\left(T\slash\delta\right)}  \right\}.
\end{align*}
We obtain the following bound on $R_1$ by combining results of Lemma~\ref{lemma:stopped_mtgle} and Lemma~\ref{lemma:t_4}.  
\begin{lemma}[Bounding $R_1$]\label{lemma:r_1_b}
Let
\begin{align}\label{def:U_1}
\cU_1 :&= \sqrt{T g(\delta,T)\log\left(T\slash \delta\right)}\notag\\
& +  2P_{\max} g^{2}(\delta,T) + P_{\max}  f(\delta,T) g(\delta,T),
\end{align}
where $g(\delta,T),f(\delta,T)$ are as in~\eqref{def:upper_bound_expr},~\eqref{def:f}. On $\cG \cap \left( \cH \cap \cJ \right)$ we have $R_1 \le \cU_1 $.
\end{lemma}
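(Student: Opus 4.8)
The plan is to assemble the bound on $R_1$ directly from the decomposition already established, namely $R_1=\cT_1+\cT_2$ with $\cT_1=\cT_3+\cT_4$, so that $R_1=\cT_3+\cT_4+\cT_2$, and then to control the three summands separately on the event $\cG\cap(\cH\cap\cJ)$. Two of the three pieces are immediate quotations of earlier results: on $\cG$, Lemma~\ref{lemma:stopped_mtgle} gives $\cT_3<\sqrt{T g(\delta,T)\log(T/\delta)}$, which is exactly the first summand of $\cU_1$; and Lemma~\ref{lemma:t_4} supplies $\cT_4\le P_{\max} f(\delta,T) g(\delta,T)$, the third summand. Thus the only piece requiring a direct argument here is $\cT_2$, for which the claim is the contribution $2P_{\max}g^2(\delta,T)$.

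To handle $\cT_2$ I would use that the cost-to-go is a nonnegative quadratic, $V_\theta(x,\ell)=P_\theta(\ell)x^2$, and set $P_{\max}:=\max_{\theta\in\Theta,\,\ell\in\{0,1\}}P_\theta(\ell)$. The subtracted term $V_{\theta(1)}(x(1),\ell(1))=P_{\theta(1)}(\ell(1))x^2(1)$ is nonnegative and, on $\cH\cap\cJ$, at most $P_{\max}g^2(\delta,T)$ by the state bound of Lemma~\ref{lemma:bounded_xt}. For the leading term I would expand the projection onto $\cF_{T-1}$ of $V_{\theta(T-1)}(x_{\theta\ust}(T;u(T-1)),\ell\ust(T))$: since $\ell\ust(T)$ and $w(T-1)$ are fresh draws independent of $\cF_{T-1}$, the conditional expectation factors through $\bE\,P_{\theta(T-1)}(\ell\ust(T))\le P_{\max}$ times a quadratic in the $\cF_{T-1}$-measurable quantities $x(T-1)$ and $u(T-1)=K_{\theta(T-1)}(\ell(T-1))x(T-1)$, plus the noise variance $\sigma^2_w$. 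On $\cH\cap\cJ$ these are bounded through $|x(T-1)|<g(\delta,T)$ and $|u(T-1)|\le K_{\max}g(\delta,T)$, so the projected value is at most $P_{\max}g^2(\delta,T)$ as well. Bounding each of the two terms of $\cT_2$ by $P_{\max}g^2(\delta,T)$ then yields $\cT_2\le 2P_{\max}g^2(\delta,T)$.

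Adding the three estimates completes the argument: the bound on $\cT_3$ needs $\cG$, while the bounds on $\cT_2$ and (via Lemma~\ref{lemma:t_4}) $\cT_4$ rest on the state bound of Lemma~\ref{lemma:bounded_xt} and hence on $\cH\cap\cJ$, so all three hold simultaneously on $\cG\cap(\cH\cap\cJ)$, giving $R_1=\cT_3+\cT_4+\cT_2\le\cU_1$.

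The step I expect to be the main obstacle is the $\cT_2$ bound, and specifically showing that the projection of the quadratic value function at the one-step simulated state $x_{\theta\ust}(T;u(T-1))$ collapses cleanly into $P_{\max}g^2(\delta,T)$. One must check that the extra factors introduced by the one-step dynamics $(A\ust x(T-1)+B\ust u(T-1))^2$ together with the process-noise variance $\sigma^2_w$ are genuinely absorbed into the stated constant, e.g. through the way $P_{\max}$ and $g(\delta,T)$ are defined or by treating the $\sigma^2_w$ term as lower order for large $T$. By contrast the $\cT_3$ and $\cT_4$ contributions are settled simply by citing Lemma~\ref{lemma:stopped_mtgle} and Lemma~\ref{lemma:t_4}.
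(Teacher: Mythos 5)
Your proposal is correct and takes essentially the same route as the paper: the paper proves this lemma simply by combining Lemma~\ref{lemma:stopped_mtgle} (for $\cT_3$, via the definition of $\cG$) with Lemma~\ref{lemma:t_4}, whose statement already contains \emph{both} bounds $\cT_4 \le P_{\max} f(\delta,T) g(\delta,T)$ and $\cT_2 \le 2P_{\max} g^{2}(\delta,T)$ on $\cH \cap \cJ$, so you could have cited it for $\cT_2$ as well rather than re-deriving that piece. Your direct derivation of the $\cT_2$ bound is nonetheless a reasonable reconstruction, and the obstacle you flag there --- that the projected one-step value carries a factor $\bigl(A\ust x(T-1)+B\ust u(T-1)\bigr)^2 + \sigma^2_w \le G^2_{cl,\max} g^2(\delta,T) + \sigma^2_w$ rather than $g^2(\delta,T)$ outright --- is a genuine imprecision, but one the paper itself commits, since its proof of Lemma~\ref{lemma:t_4} only addresses $\cT_4$ and never justifies the stated $\cT_2$ bound.
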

  
\subsection{Bounding $R_2$}
We decompose $R_2$ as follows,
\begin{align}
R_2 = \cT_5 + \cT_6.\label{def:R_2}
\end{align}
where 
\begin{align*}
\cT_5  :&= \sum_{t=1}^{T-1} V\tht(x\tht(t+1;u(t)),\ell\tht(t+1))_{\cF_t}\\
&\qquad \qquad  - V\tht(x_{\theta\ust}(t+1;u(t)),\ell\tht(t+1))_{\cF_t}\\
\cT_6 :&= \sum_{t=1}^{T-1} V\tht(x_{\theta\ust}(t+1;u(t)),\ell\tht(t+1))_{\cF_t}\\
&-V\tht(x_{\theta\ust}(t+1;u(t)),\ell\ust(t+1))_{\cF_t}.
\end{align*}

Note that under UCB-NCS, we have that $u(t)= K_{\theta(t)}(\ell(t))$. Let 
\begin{align}\label{def:k_max}
K_{\max}: = \sup_{\theta\in\Theta,\ell\in \{0,1\}} K_{\theta}(\ell), P_{\max}:= \sup_{\theta\in\Theta,\ell\in \{0,1\}} P_{\theta}(\ell).
\end{align}
After performing simple algebraic manipulations, we can show that
\begin{align}\label{ineq:r_3_b}
\cT_5  &\le P_{\max} \sum_{t=1}^{T-1} \left|\left( A\tht x(t)  +B\tht u(t) \right)^2\right.\notag\\
& \left.\qquad \qquad \qquad \qquad -\left( A\ust x(t) + B\ust u(t)\right)^{2}\right|\notag\\
& \le P_{\max}~ \cT_7^{1\slash 2}\times \cT_8^{1\slash 2}
\end{align}
where 
\begin{align*}
&\cT_7 := \sum_{t=1}^{T-1} \left| A\tht x(t)  -  A\ust x(t) + B\tht u(t) - B\ust u(t)  \right|^{2},\\
&\cT_8 := \sum_{t=1}^{T} \left|  A\tht x(t) +B\tht u(t) +  A\ust x(t) +B\ust u(t)  \right|^{2},
\end{align*}
and the last inequality in~\eqref{ineq:r_3_b} follows from Cauchy-Schwartz inequality. The terms $\cT_7,\cT_8$ are bounded in Lemma~\ref{lemma:bound_T4} and Lemma~\ref{lemma:bound_T5} in Appendix. We substitute these bounds in~\eqref{ineq:r_3_b} and obtain the following result.
\begin{lemma}\label{lemma:r_3_b}
On $\cE\cap \left(  \cH \cap \cJ\right)$, we have 
\begin{align}
\cT_5 &\le C_1 \sqrt{T}\log\left(V_1(T)\slash \lambda\right) \left(\gamma_1(\delta,T) +  \gamma_2(\delta,T)+2\lambda^{1\slash 2} \right)\notag\\
 &\times \sqrt{h(\delta,T)}~g^{3\slash 2}(\delta,T),
\mbox{ where, }\label{def:U_2}\\
C_1 :&= 2\sqrt{2}P_{\max}\left(1+K_{\max}\right) G_{cl,\max}\slash \lambda.\label{def:C_1}
\end{align}
\end{lemma}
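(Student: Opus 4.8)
The plan is to take as given the Cauchy--Schwarz decomposition already recorded in \eqref{ineq:r_3_b}, $\cT_5\le P_{\max}\,\cT_7^{1/2}\,\cT_8^{1/2}$, bound the two factors separately (these are exactly Lemma \ref{lemma:bound_T4} and Lemma \ref{lemma:bound_T5}), and then multiply. The reason the split helps is that $\cT_7$ is the genuine \emph{estimation-error} term, whose summands $|(A\tht-A\ust)x(t)+(B\tht-B\ust)u(t)|^2$ are small because the optimistic model $\theta(t)$ and the truth $\theta\ust$ are both close to $\hat\theta$, whereas $\cT_8$ is a pure \emph{magnitude} term whose summands are sums rather than differences of predictions and hence carry no smallness.

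For $\cT_8$ I would rewrite each prediction as a closed-loop map, $A\tht x(t)+B\tht u(t)=(A\tht+B\tht K_{\theta(t)}(\ell(t)))x(t)$ and likewise with $\theta\ust$, bound each bracket by $G_{cl,\max}|x(t)|$ using boundedness of the parameters over $\Theta$, and conclude $\cT_8\le 4G_{cl,\max}^2\sum_{t=1}^{T}x^2(t)$. On $\cH\cap\cJ$, Lemma \ref{lemma:bounded_xt} gives $|x(t)|<g(\delta,T)$, so $\cT_8$ is controlled by a constant multiple of $T$ and powers of $g(\delta,T)$ (together with the auxiliary quantity $h(\delta,T)$ that arises from the accounting of observed slots); this is Lemma \ref{lemma:bound_T5}, and it supplies the $\sqrt{T}$ and part of the $g$--dependence of the final bound.

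For $\cT_7$ the decisive observation is that on $\cE$ both $\theta\ust$ and the optimistic $\theta(t)=\theta(\tau_k)$ lie in the confidence ball $\cC(\tau_k)$, so $|A\tht-A\ust|\le 2\beta_1(\tau_k)$ and $|B\tht-B\ust|\le 2\beta_2(\tau_k)$. Since a new episode is triggered as soon as $V_1$ or $V_2$ doubles, we have $V_i(t)\le 2V_i(\tau_k)$ throughout episode $k$, which lets me trade the episode-start radii for the running radii at the cost of a $\sqrt2$, giving $|A\tht-A\ust|\le 2\sqrt2\,\beta_1(t)$ and similarly for $B$. Substituting, using $|u(t)|\le K_{\max}|x(t)|$, and expanding $\beta_i(t)=(\gamma_i(\delta,t)+\lambda^{1/2})/\sqrt{V_i(t)}$ reduces $\cT_7$ to weighted self-normalized sums of the form $\sum_t x^2(t)/V_1(t)$ and $\sum_t u^2(t)/V_2(t)$; the elliptic-potential (log-determinant) bound applied to the increments defining $V_1,V_2$ then produces the $\log(V_1(T)/\lambda)$ factor and the $(\gamma_1(\delta,T)+\gamma_2(\delta,T)+2\lambda^{1/2})$ factor, with the state bound $g(\delta,T)$ converting the full-horizon sums into the potential-increasing ones. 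This is the content of Lemma \ref{lemma:bound_T4}.

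I expect the self-normalized summation for $\cT_7$ to be the main obstacle, precisely because of the channel erasures: the potential driving $V_1$ accumulates only on slots with $\ell(t)=0$ and that driving $V_2$ only on slots with $\ell(t)=1$, while $\cT_7$ sums over all $t$ and couples state and control through $u(t)=K_{\theta(t)}(\ell(t))x(t)$. The care needed is in splitting each summand by the channel state, using $|x(t)|\le g(\delta,T)$ to absorb the non-accumulating slots, and tracking the doubling constants so that the piecewise-constant per-episode radii telescope against the per-step potential increments. Once the two factors are recombined in $\cT_5\le P_{\max}\,\cT_7^{1/2}\,\cT_8^{1/2}$, the constants $2\sqrt2$, $(1+K_{\max})$, $G_{cl,\max}$ and $1/\lambda$ collect into $C_1$ as in \eqref{def:C_1}, and the bound \eqref{def:U_2} follows.
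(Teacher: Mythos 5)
Your overall architecture is the paper's: the Cauchy--Schwarz split $\cT_5\le P_{\max}\,\cT_7^{1/2}\,\cT_8^{1/2}$ from \eqref{ineq:r_3_b}, a confidence-radius argument for $\cT_7$ on $\cE$ (triangle inequality through $\hat{A}(\tau_k)$, $|u(t)|\le K_{\max}|x(t)|$, and the elliptic-potential bound of Lemma~10 of~\cite{abbasi2011regret} yielding $\log(V_1(T)/\lambda)$), and the crude magnitude bound for $\cT_8$ via Lemma~\ref{lemma:bounded_xt}; these are exactly Lemmas~\ref{lemma:bound_T4} and~\ref{lemma:bound_T5}. However, one step of your $\cT_7$ argument is wrong. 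You claim that since a new episode is triggered as soon as $V_1$ or $V_2$ doubles, $V_i(t)\le 2V_i(\tau_k)$ holds throughout episode $k$, so the episode-start radii can be traded for running radii at the cost of $\sqrt{2}$. This fails because UCB-NCS also enforces a minimum episode length: in Algorithm~\ref{algo:ucb} the doubling test fires only when additionally $t-\tau\ge L$, i.e., $\tau_{k+1}-\tau_k\ge L$. During the first $L$ slots of an episode $V_1(t)$ can grow far past $2V_1(\tau_k)$ (e.g., $V_1(\tau_k)$ may be of order $\lambda$ while the next $L$ states contribute up to $L\,g^{2}(\delta,T)$ on $\cH\cap\cJ$). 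This is precisely why the paper proves Lemma~\ref{lemma:bound_fluctuations_xt}: $V_1(t)/V_1(\tau_k)\le h(\delta,T)$, with $h$ as in \eqref{def:h}, obtained by splitting into $t\in[\tau_k,\tau_k+L]$ (handled via the state bound) and $t>\tau_k+L$ (where the ratio is indeed below $2$). The factor $\sqrt{h(\delta,T)}$ appearing in the lemma's bound \eqref{def:U_2} comes from exactly this step, so your $\sqrt{2}$ substitute both is unjustified by the algorithm and cannot reproduce the stated bound.

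A related misplacement: you attribute $h(\delta,T)$ to the $\cT_8$ accounting, but in the paper $h$ enters only through the $\cT_7$ bound (Lemma~\ref{lemma:bound_T4}), while Lemma~\ref{lemma:bound_T5} bounds $\cT_8$ purely as $G^{2}_{cl,\max}\,T\,g(\delta,T)$ with no $h$. If you replace your doubling claim by Lemma~\ref{lemma:bound_fluctuations_xt} and move $h$ into the $\cT_7$ factor, the remainder of your argument goes through and coincides with the paper's proof, with the constants collecting into $C_1$ as in \eqref{def:C_1}.
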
 
\vspace{.25cm}
It remains to bound $\cT_6$ in order to bound $R_2$. This is done in Lemma~\ref{lemma:t_6} of Appendix. 
\begin{lemma}\label{lemma:r_2}
Let 
\begin{align*}
&\cU_2:=  C_1 \sqrt{T}\log\left(V_1(T)\slash \lambda\right) \left(\gamma_1(\delta,T) +  \gamma_2(\delta,T)+2\lambda^{1\slash 2} \right)\\
& \times \sqrt{h(\delta,T)}~g^{3\slash 2}(\delta,T) \\
&+ P_{\max}\left(  G^{2}_{cl,\max}~g(\delta,T) + \sigma^2 \right)\sqrt{\alpha T \log T}.
\end{align*}
On $\cE\cap  \cH \cap \cJ$, we have $R_2\le \cU_2$.
\end{lemma}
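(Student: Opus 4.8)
The plan is to assemble Lemma~\ref{lemma:r_2} by combining the two pieces into which $R_2$ was already decomposed in~\eqref{def:R_2}, namely $R_2 = \cT_5 + \cT_6$, and then bounding each summand separately on the event $\cE \cap \cH \cap \cJ$. The first term $\cT_5$ requires no new work: Lemma~\ref{lemma:r_3_b} already establishes, on precisely this event, that
\begin{align*}
\cT_5 \le C_1 \sqrt{T}\log\left(V_1(T)\slash \lambda\right)\left(\gamma_1(\delta,T)+\gamma_2(\delta,T)+2\lambda^{1\slash 2}\right)\sqrt{h(\delta,T)}\,g^{3\slash 2}(\delta,T),
\end{align*}
which is exactly the first group of terms defining $\cU_2$. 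So the real content is the bound on $\cT_6$, which the excerpt defers to Lemma~\ref{lemma:t_6} in the appendix; I would invoke that lemma to supply the remaining summand $P_{\max}\left(G^2_{cl,\max}\,g(\delta,T)+\sigma^2\right)\sqrt{\alpha T \log T}$.

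To see where that $\cT_6$ bound comes from, I would first observe the structural meaning of the difference: $\cT_6$ sums, over $t$, the gap between the cost-to-go $V\tht$ evaluated at the next state computed with the \emph{true} channel law $\ell\tht$ versus the \emph{optimistic} channel law $\ell\ust$, with the plant map and control held identical. Since the only difference between the two conditional expectations is the distribution of the Bernoulli channel variable, each increment is governed by $|p\tht - p\ust|$ and the curvature of the piecewise-quadratic value function, which is controlled by $P_{\max}$ from~\eqref{def:k_max}. On $\cE$ the confidence set contains $\theta\ust$, so $|p\tht-p\ust|$ is bounded by the channel radius $\beta_3(t)=\sqrt{\log(1\slash\delta)\slash t}$; summing $\sum_t 1\slash\sqrt{t}$ across the horizon produces the $\sqrt{T}$ scaling, and the $\sqrt{\alpha \log T}$ factor reflects the deviation control of the channel through the events $\cJ_{t_1,t_2}$~\eqref{ineq:ah}. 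The magnitude of the quadratic state-and-control contribution is where $g(\delta,T)$ (the state bound from Lemma~\ref{lemma:bounded_xt}) and $G^2_{cl,\max}$ enter, together with the additive noise variance $\sigma^2$.

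With both summands in hand, I would conclude by additivity: on $\cE \cap \cH \cap \cJ$,
\begin{align*}
R_2 = \cT_5 + \cT_6 \le \cU_2,
\end{align*}
simply reading off that $\cU_2$ is the sum of the Lemma~\ref{lemma:r_3_b} bound and the $\cT_6$ bound. The only subtlety in the write-up is bookkeeping of the conditioning event: Lemma~\ref{lemma:r_3_b} is stated on $\cE\cap(\cH\cap\cJ)$ and the $\cT_6$ bound must hold on the same event, so I would make sure both appendix lemmas are invoked under the identical intersection before adding, which is immediate since intersecting does not shrink the validity of either individual bound.

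The main obstacle I anticipate is genuinely the $\cT_6$ estimate, which the excerpt black-boxes as Lemma~\ref{lemma:t_6}. The delicate point there is that the two conditional expectations share the \emph{same} realized state $x(t)$ and control $u(t)$ but differ in the channel distribution used to form the expectation over $\ell(t+1)$; one must carefully expand $V\tht$ as a $\ell$-indexed piecewise-quadratic, take the difference of the two Bernoulli-weighted averages, and bound the result by a term linear in $|p\tht - p\ust|$ times a quadratic in the current state and control. Controlling that quadratic uniformly over the horizon relies on the state bound $g(\delta,T)$ holding on $\cH\cap\cJ$, and converting the per-step channel-estimation error into the aggregate $\sqrt{T\log T}$ rate relies on a martingale/deviation argument analogous to the $\cJ$ construction. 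Everything else is routine addition once those two appendix lemmas are accepted.
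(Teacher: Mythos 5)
Your proposal matches the paper's own proof of Lemma~\ref{lemma:r_2} exactly: the paper likewise just substitutes the bound on $\cT_5$ from Lemma~\ref{lemma:r_3_b} and the bound on $\cT_6$ from Lemma~\ref{lemma:t_6} into the decomposition $R_2=\cT_5+\cT_6$ of~\eqref{def:R_2} and adds, all on the common event $\cE\cap\cH\cap\cJ$. Your supplementary sketch of the $\cT_6$ estimate also tracks the appendix proof (bounding each increment by $P_{\max}\bigl(G^{2}_{cl,\max}x^{2}(t)+\sigma^{2}\bigr)\left|p\tht-p\ust\right|$, then using the confidence set on $\cE$ and the state bound $g(\delta,T)$ on $\cH\cap\cJ$), so nothing further is needed.
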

\begin{proof}
Follows by substituting bounds on $\cT_5$ and $\cT_6$ from Lemma~\ref{lemma:r_3_b} and Lemma~\ref{lemma:t_6} into~\eqref{def:R_2}. 
\end{proof}
\section{Main Result}
\begin{theorem}[Bound on Regret]
Consider the NCS operating under UCB-NCS described in Algorithm~\ref{algo:ucb}.
Under Assumption~\ref{assum:clg}, $R(T)\le \cU_1 +\cU_2$ with a probability at least $7\delta +T^{2}\slash L^{\alpha}$. The terms $\cU_1,\cU_2$ are defined in~\eqref{def:U_1} and~\eqref{def:U_2} respectively. Upon ignoring terms and factors that are $O(\log T)$, this bound simplifies to 
\begin{align*}
  \sqrt{T}\left(\log^{1\slash 4}(1\slash \delta)+\sqrt{\alpha} P_{\max}G^{2}_{cl,\max} \log^{1\slash 2}(1\slash \delta)+ C_1\right).
\end{align*}
\end{theorem}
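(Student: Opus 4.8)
The plan is to assemble the theorem directly from the three regret bounds already established, Lemma~\ref{lemma:bellman_regret}, Lemma~\ref{lemma:r_1_b}, and Lemma~\ref{lemma:r_2}, and then to carry out a union bound over the various ``good'' events on which those bounds are valid. First I would fix the event
\[
\mathcal{A} := \cE \cap \cG \cap \cH \cap \cJ,
\]
and observe that $\mathcal{A}$ is contained in each of the conditioning events required by the three lemmas: $\cE$ is what Lemma~\ref{lemma:bellman_regret} needs, $\cG\cap(\cH\cap\cJ)$ is what Lemma~\ref{lemma:r_1_b} needs, and $\cE\cap\cH\cap\cJ$ is what Lemma~\ref{lemma:r_2} needs. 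Hence, on $\mathcal{A}$, chaining these three results gives $R(T)\le R_1+R_2\le \cU_1+\cU_2$, which is exactly the asserted inequality. This deterministic chaining is the easy part.

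The second step is to control $\bP(\mathcal{A}^{c})$ by a union bound, and here the only subtlety is to avoid double counting the events $\cH$ and $\cJ$, which appear both on their own and inside the failure bound for $\cG$. Writing $\mathcal{A}^{c}=\cE^{c}\cup\cG^{c}\cup\cH^{c}\cup\cJ^{c}$, I would bound $\bP(\cE^{c})\le 3\delta$ by the first (unlabeled) large-deviation lemma, $\bP(\cH^{c})\le\delta$ by Lemma~\ref{lemma:bound_noise}, and $\bP(\cJ^{c})\le T^{2}\slash L^{\alpha}$ by Lemma~\ref{lemma:p_j}. For $\cG$, Lemma~\ref{lemma:stopped_mtgle} gives $\bP(\cG^{c})\le\delta+\bP([\cH\cap\cJ]^{c})$; since $[\cH\cap\cJ]^{c}=\cH^{c}\cup\cJ^{c}$ is already absorbed into the remaining terms, the net new contribution of $\cG^{c}$ is the single $\delta$ coming from the self-normalized martingale tail of Proposition~34 of~\cite{tao2011random}. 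Adding the (also $O(\delta)$) failure probabilities incurred by the appendix lemmas used to bound $\cT_5,\cT_6$ (Lemma~\ref{lemma:r_2}) accumulates the constant to $7\delta$, so that $\bP(\mathcal{A}^{c})\le 7\delta + T^{2}\slash L^{\alpha}$, i.e., the bound $R(T)\le\cU_1+\cU_2$ holds off an event of probability at most $7\delta+T^{2}\slash L^{\alpha}$.

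For the asymptotic simplification I would substitute the definitions~\eqref{def:U_1},~\eqref{def:U_2} together with those of $g$, $\gamma_i$, $C_1$ and the appendix quantities $f,h$, and then retain only the dominant dependence on $T$ and $\delta$ while collapsing every pure $\log T$ factor into $\tilde O(1)$. The key observation is that the only carrier of nontrivial $\delta$-dependence is $g(\delta,T)=|x(0)|+\log^{1\slash 2}(T\slash\delta)\slash(1-\exp(-\eta))=\tilde O(\log^{1\slash 2}(1\slash\delta))$. Tracking this through the three leading terms gives: the martingale term $\sqrt{T\,g(\delta,T)\log(T\slash\delta)}$ in $\cU_1$ contributes $\sqrt{T}\,\log^{1\slash 4}(1\slash\delta)$ (via $g^{1\slash 2}$); the sampling term $P_{\max}G^{2}_{cl,\max}\,g(\delta,T)\sqrt{\alpha T\log T}$ in $\cU_2$ contributes $\sqrt{\alpha}\,P_{\max}G^{2}_{cl,\max}\,\log^{1\slash 2}(1\slash\delta)\,\sqrt{T}$; and the estimation-error term in $\cU_2$, whose prefactor is $C_1$ and whose remaining factors $\log(V_1(T)\slash\lambda)$, $\gamma_1+\gamma_2+2\lambda^{1\slash 2}$, $\sqrt{h}$, $g^{3\slash 2}$ are all polylogarithmic, contributes $C_1\sqrt{T}$. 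Summing the three yields the displayed expression.

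I expect the main obstacle to be the probability bookkeeping rather than any new analytic content: because the conditioning events overlap (in particular $\cG$'s failure bound is itself phrased through $\cH\cap\cJ$, and the appendix lemmas re-use $\cE,\cH,\cJ$), one must count each bad event exactly once to land on the clean $7\delta+T^{2}\slash L^{\alpha}$ rather than an inflated multiple. A secondary care point is that the ``simplification'' is genuinely informal about which copies of $\log(1\slash\delta)$ are kept versus hidden, so I would state explicitly that $\log T$ factors (and the $\log(T\slash\delta)$ factor appearing outside $g$) are absorbed into $\tilde O(\cdot)$ while the $\delta$-dependence inside $g$ is retained, in order to make the final scaling $\tilde O(C\sqrt{T})$ unambiguous.
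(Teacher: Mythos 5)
Your proposal follows essentially the same route as the paper's proof: chain Lemma~\ref{lemma:bellman_regret} with the bounds of Lemma~\ref{lemma:r_1_b} and Lemma~\ref{lemma:r_2} on the event $\cG\cap\cE\cap\cH\cap\cJ$, then union-bound that event's complement, and finally simplify $\cU_1+\cU_2$ by tracking the $\delta$-dependence through $g(\delta,T)$. One minor correction to your bookkeeping: the appendix lemmas bounding $\cT_5$ and $\cT_6$ hold \emph{deterministically} on $\cE\cap\cH\cap\cJ$ and introduce no new failure probability, so the accumulation to $7\delta$ cannot come from them; a careful non-double-counting tally of $\bP(\cE^{c})\le 3\delta$, $\bP(\cH^{c})\le\delta$, $\bP(\cJ^{c})\le T^{2}\slash L^{\alpha}$, and the extra $\delta$ from Lemma~\ref{lemma:stopped_mtgle} actually yields a failure probability somewhat smaller than the paper's (conservative, and in its statement misphrased as ``at least'') constant $7\delta+T^{2}\slash L^{\alpha}$, so your conclusion still stands.
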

\begin{proof}
It follows from Lemma~\ref{lemma:bellman_regret} that $R(T)\le R_1+R_2$ on $\cE$.
Proof then follows by substituting upper-bounds from Lemma~\ref{lemma:r_1_b}, Lemma~\ref{lemma:r_2}, and using union bound to lower-bound the probability of $\cG \cap \cE \cap \cH \cap \cJ$.
\end{proof}
\section{Conclusion and Future Work}
We propose UCB-NCS, an adaptive control law, or learning rule for NCS, and provide its finite-time performance guarantees. We show that with a high probability, its regret scales as $\tilde{O}(\sqrt{T})$ upto constant factors. We identify a certain quantity which we call margin of stability of NCS. Regret increases with a smaller margin, which indicates a low network quality. \par
Results in this work can be extended in various directions. So far we considered only scalar systems. A natural extension is to the case of vector systems. Another direction is to derive lower-bounds on expected value of regret that can be achieved under any admissible control policy. 
\appendix\label{appendix}
\begin{lemma}[Bounding $\cT_7$]\label{lemma:bound_T4}
On $\cE \cap \left( \cH \cap \cJ \right)$, we have
\begin{align*}
\cT_7 &\le  \left(1+K_{\max}\right)^{2} \left(\gamma_1(\delta,T)+\gamma_2(T) + 2\lambda^{1\slash 2} \right)^{2} \\
&\times 2\left\{2\sqrt{h(\delta,T)}\right\}^{2}  \cdot g^{2}(\delta,T)\cdot \frac{1}{\lambda} \log\left(\frac{V_1(T)}{\lambda}\right).
\end{align*}
\end{lemma}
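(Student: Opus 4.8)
The plan is to expand the square defining $\cT_7$, use the closed-loop form of the control to merge the $A$- and $B$-errors into a single coefficient, replace that coefficient by the radii of the confidence sets on $\cE$, and finally recognise the remaining weighted sum as a log-determinant (elliptical-potential) sum that telescopes to $\log(V_1(T)/\lambda)$.

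First I would use that under UCB-NCS $u(t)=K_{\theta(t)}(\ell(t))x(t)$ with $|K_{\theta(t)}(\ell(t))|\le K_{\max}$, so that each summand of $\cT_7$ equals $\big((A\tht-A\ust)+(B\tht-B\ust)K_{\theta(t)}(\ell(t))\big)^{2}x^{2}(t)$, whose coefficient is at most $|A\tht-A\ust|+K_{\max}|B\tht-B\ust|$; this is the origin of the $(1+K_{\max})$ factor. On $\cE$ both the UCB estimate $\theta(t)=\theta(\tau_k)$ and the truth $\theta\ust$ belong to the confidence set $\cC(\tau_k)$ active on the episode containing $t$, so $|A\tht-A\ust|\le 2\beta_1(\tau_k)$ and $|B\tht-B\ust|\le 2\beta_2(\tau_k)$. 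Substituting the definitions of $\beta_1,\beta_2$ and using that $\gamma_i(\delta,\cdot)$ is nondecreasing, while the episode rule keeps $V_i(\tau_k)$ comparable to $V_i(t)$ throughout an episode, lets me freeze the radii at their time-$T$ values and pull out the factor $(\gamma_1(\delta,T)+\gamma_2(\delta,T)+2\lambda^{1/2})^{2}$, leaving weighted sums of $x^{2}(t)$ with $1/V_1(t)$ and $1/V_2(t)$ weights.

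Next I would invoke Lemma~\ref{lemma:bounded_xt} to bound $x^{2}(t)\le g^{2}(\delta,T)$ on $\cH\cap\cJ$, producing the factor $g^{2}(\delta,T)$, and then reduce the surviving sums to log-determinant sums. For the slots with $\ell(t)=0$ the quantities $x^{2}(t)(1-\ell(t))$ are exactly the increments of $V_1$, so the standard elliptical-potential inequality gives a bound of order $(g^{2}(\delta,T)/\lambda)\log(V_1(T)/\lambda)$ — precisely the $\tfrac1\lambda\log(V_1(T)/\lambda)$ factor in the statement. The contribution of the slots with $\ell(t)=1$, on which $V_1$ does not increase and the $1/V_2(t)$ weights appear, is controlled using the channel-concentration event $\cJ$ to bound the number of such slots in each episode against the $\ell(t)=0$ slots; this is exactly what the quantity $h(\delta,T)$ packages and what yields the multiplicative $2\left\{2\sqrt{h(\delta,T)}\right\}^{2}$. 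Assembling the three factors gives the stated bound.

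The main obstacle is this last point. Because the good-channel slots make $V_1$ stall, a naive elliptical-potential estimate fails and a crude count of good slots would contribute a term linear in $T$, which would destroy the $\sqrt{T}$ regret. The delicate step is to show, using $\cJ$ together with the minimum-episode-length constraint $\tau_{k+1}-\tau_k\ge L$, that the good slots are interleaved with enough bad slots that their aggregate contribution stays polylogarithmic; controlling this interleaving, rather than the bookkeeping of the constants, is where the real work lies.
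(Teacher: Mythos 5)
Your first two-thirds reproduce the paper's argument almost step for step: the closed-loop substitution $u(t)=K_{\theta(t)}(\ell(t))x(t)$ giving the coefficient bound~\eqref{ineq:adhoc} and the $(1+K_{\max})$ factor, the triangle inequality through the point estimate~\eqref{ineq:triangle} with both $\theta(t)$ and $\theta\ust$ lying in $\cC(\tau_k)$ on $\cE$, freezing the radii at episode start and at their time-$T$ values by monotonicity of $\gamma_i$, bounding $x^{2}(t)\le g^{2}(\delta,T)$ via Lemma~\ref{lemma:bounded_xt}, and closing with an elliptical-potential sum (the paper invokes Lemma~10 of~\cite{abbasi2011regret} for $\sum_{t}\left[x^{2}(t)/ V_1(t)\wedge 1\right]$, the $1/\lambda$ arising because $x^{2}(t)/V_1(t)\le g^{2}(\delta,T)/\lambda$). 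Up to that point the proposal is sound and matches the paper.

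The genuine gap is in your last step, and it comes packaged with a misidentification of $h(\delta,T)$. In the paper, $h(\delta,T)$ has nothing to do with interleaving good- and bad-channel slots via $\cJ$: it is defined in~\eqref{def:h} and established in Lemma~\ref{lemma:bound_fluctuations_xt} as a deterministic bound on the \emph{within-episode growth ratio} $V_1(t)/V_1(\tau_k)$, proved from the minimum episode length $L$, the doubling rule (the ratio stays below $2$ after the first $L$ slots of an episode), and the state bound $g(\delta,T)$. It enters in~\eqref{ineq:3} precisely to convert the radius at episode start, normalized by $\sqrt{V_1(\tau_k)}$, into the weight $|x(t)|/\sqrt{V_1(t)}$ at the current time --- i.e., it is the quantitative form of your own remark that ``the episode rule keeps $V_i(\tau_k)$ comparable to $V_i(t)$,'' which you invoked without constants and then re-derived $h$ from an entirely different mechanism. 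The factor $2\left\{2\sqrt{h(\delta,T)}\right\}^{2}$ is just bookkeeping: $2\sqrt{h}$ from the two triangle-inequality terms in~\eqref{ineq:1}, each carrying one $\sqrt{h}$, squared, times $2$ from $(a+b)^{2}\le 2(a^{2}+b^{2})$. Consequently the channel-interleaving argument you describe is not what produces $h$, and --- more seriously --- you never carry it out: you end by declaring that controlling the $\ell(t)=1$ slots where $V_1$ stalls ``is where the real work lies.'' A proof proposal cannot delegate its self-identified hardest step to a remark; as written, the bound is not established. You have, to your credit, put your finger on a point where the paper itself is terse (it applies the elliptical-potential lemma to $\sum_{t}\left[x^{2}(t)/V_1(t)\wedge 1\right]$ over all $t$ even though $V_1$ only increments on $\ell(t)=0$ slots), but the paper's $h$-mechanism is the episode-comparability one, not your $\cJ$-interleaving, and your write-up leaves the stalling slots as an acknowledged hole rather than a completed argument.
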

\begin{proof}
Let $\tau$ be the time step at which the latest episode begins. Since under UCB-NCS we have $u(t)=  K_{\theta(t)}(\ell(t)) x(t)$, it can be shown that
\begin{align}\label{ineq:adhoc}
&\left| \left( A\tht x(t)  -  A\ust x(t) \right) + \left( B\tht u(t) - B\ust u(t)  \right) \right| \notag\\
&\le \left| \left( A\tht  -  A\ust \right) \right| \left| x(t) \right|+ \left| \left( B\tht  - B\ust\right)\right| K_{\max}\left| x(t) \right|.
\end{align}
Now consider the following inequality,
\begin{align}\label{ineq:triangle}
\left| \left( A\tht  -  A\ust \right) \right| \le \left| A\tht  - \hat{A}(t)  \right| + \left|\hat{A}(t) - A\ust  \right|.
\end{align}
For $\theta = \left(A,B,p\right)\in \cC(\tau)$, we have,  
\begin{align}\label{ineq:3}
\left| A - \hat{A}(\tau) \right| |x(t)| &\le \sqrt{V_1(\tau)} \left| A - \hat{A}(\tau) \right| \frac{|x(t)| }{\sqrt{V_1(t)}} \sqrt{h(\delta,T)}\notag\\
&\le \left(\gamma_1(\tau)+\lambda^{1\slash 2} \right)\frac{|x(t)| }{\sqrt{V_1(t)}} \sqrt{h(\delta,T)},
\end{align}
where the first inequality follows from Lemma~\ref{lemma:t_4}, and second inequality follows from the size of the confidence intervals~\eqref{def:ucb_ci}. On $\cE$, we have $\theta\ust\in \cC(\tau)$, and also $\theta(t)\in \cC(\tau)$; so we use inequality~\eqref{ineq:3} with $\theta$ set equal to $\theta\ust,\theta(t)$, and combine the resulting inequalities with~\eqref{ineq:triangle} in order to obtain the following,
\begin{align}\label{ineq:1}
&\left| \left( A\tht  -  A\ust \right) \right| \left| x(t) \right| \notag\\
&\le  2\sqrt{h(\delta,T)} \left(\gamma_1(\delta,t)+\lambda^{1\slash 2} \right)\frac{|x(t)| }{\sqrt{V_1(t)}}.
\end{align}
A similar bound can be obtained for $\left| \left( B\tht  -  B\ust \right) \right|\left| x(t)\right|$ also. 
Remaining proof comprises of substituting these bounds in~\eqref{ineq:adhoc} and performing algebraic manipulations. We also utilize Lemma~10 of~\cite{abbasi2011regret} in order to bound  $\sum_{t=1}^{T}\left[|x(t)|^2\slash V_1(t) \wedge 1 \right], \sum_{t=1}^{T}\left[|x(t)|^2\slash V_2(t) \wedge 1 \right]$. 
\end{proof}

\begin{lemma}[Bounding $\cT_8$]\label{lemma:bound_T5}
On $ \cH \cap \cJ$, we have
\begin{align*}
\cT_8 \le G^{2}_{cl,\max}~T g(\delta,T),
\end{align*}
where
\begin{align*}
G_{cl,\max} :=  \sup_{\theta\in\Theta,\ell\in \{0,1\} }\left\{ |  A_{\theta} +B_{\theta} K_{\ell}(\theta)|, | A\ust  +B\ust K_{\ell}(\theta) |  \right\}
\end{align*}
\end{lemma}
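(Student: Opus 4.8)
The plan is to exploit the closed-loop structure of the dynamics under UCB-NCS together with the uniform state bound already furnished by Lemma~\ref{lemma:bounded_xt}. First I would substitute the control law $u(t)=K_{\theta(t)}(\ell(t))x(t)$ into each summand of $\cT_8$. This converts the two affine combinations appearing there into $\big(A\tht+B\tht K_{\theta(t)}(\ell(t))\big)x(t)$ and $\big(A\ust+B\ust K_{\theta(t)}(\ell(t))\big)x(t)$ respectively. Since $\theta(t)\in\Theta$ and $\ell(t)\in\{0,1\}$ at every $t$, the magnitude of each of these two closed-loop coefficients is dominated by the supremum $G_{cl,\max}$ defined in the statement; the key observation is that the definition of $G_{cl,\max}$ has been set up to bound \emph{both} the estimated-parameter coefficient $A\tht+B\tht K$ and the true-parameter coefficient $A\ust+B\ust K$ evaluated at the \emph{same} gain $K_{\theta(t)}(\ell(t))$.

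Next I would apply the triangle inequality to the $t$-th summand, which gives
\[
\left| A\tht x(t)+B\tht u(t)+A\ust x(t)+B\ust u(t)\right|\le 2\,G_{cl,\max}\,|x(t)|,
\]
and hence, after squaring, a bound of $4\,G^2_{cl,\max}\,|x(t)|^2$ on each term of the sum. The final step is to invoke Lemma~\ref{lemma:bounded_xt}, which guarantees $|x(t)|<g(\delta,T)$ for every $t\in[1,T]$ precisely on the conditioning event $\cH\cap\cJ$ under which the present lemma is asserted. Replacing $|x(t)|^2$ by $g^2(\delta,T)$ and summing the $T$ terms then yields $\cT_8\le 4\,G^2_{cl,\max}\,T\,g^2(\delta,T)$.

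Conceptually there is no obstacle: the argument is a direct chain of substitution, triangle inequality, squaring, and the uniform state bound, and no fresh probabilistic estimate is required because the randomness has already been confined to $\cH\cap\cJ$. The only point demanding attention is reconciling the constant and the power of $g$ with the stated inequality: the elementary bound above produces $4\,G^2_{cl,\max}\,T\,g^2(\delta,T)$, so matching the displayed form $G^2_{cl,\max}\,T\,g(\delta,T)$ requires either absorbing the numerical factor and a power of $g$ into the notation or interpreting the statement as the corresponding $O(\cdot)$ simplification; this bookkeeping, rather than any mathematical difficulty, is where I would be careful.
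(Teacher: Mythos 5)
Your proposal is correct and takes essentially the same route as the paper, whose entire proof is the one-line remark that the bound follows from Lemma~\ref{lemma:bounded_xt} once each closed-loop coefficient is dominated by $G_{cl,\max}$. Your bookkeeping caveat is also well placed: the careful computation indeed gives $4\,G^{2}_{cl,\max}\,T\,g^{2}(\delta,T)$, so the paper's displayed bound $G^{2}_{cl,\max}\,T\,g(\delta,T)$ silently absorbs the constant and a power of $g$ (a looseness that then propagates into the $g^{3\slash 2}$ factor of Lemma~\ref{lemma:r_3_b}).
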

\begin{proof}
Follows from Lemma~\ref{lemma:bounded_xt}.
\end{proof}
\begin{lemma}\label{lemma:t_6}
On $\cE\cap \left(\cH \cap \cJ \right)$ we have 
\begin{align*}
\cT_6 \le P_{\max}\left(  G^{2}_{cl,\max} g(\delta,T)  + \sigma^2 \right)\sqrt{\alpha T \log T}.
\end{align*}
\end{lemma}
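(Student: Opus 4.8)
The plan is to recognize $\cT_6$ as a \emph{channel-distribution mismatch} term: the two value functions inside each summand are evaluated at exactly the same next state $x_{\theta\ust}(t+1;u(t))$, and differ only in that the channel indicator at time $t+1$ is drawn as $\ell\tht(t+1)\sim\mathrm{Bernoulli}(p\tht)$ in the first term and as $\ell\ust(t+1)\sim\mathrm{Bernoulli}(p\ust)$ in the second. First I would integrate out this channel randomness: conditioning on $\cF_t$ and writing $X_t := x_{\theta\ust}(t+1;u(t))$, each summand collapses to
\begin{align*}
(p\tht-p\ust)\,\bE\!\left[\,V\tht(X_t,1)-V\tht(X_t,0)\,\middle|\,\cF_t\right],
\end{align*}
since $\bE_{\ell\sim\mathrm{Bernoulli}(q)}V\tht(X_t,\ell)=q\,V\tht(X_t,1)+(1-q)\,V\tht(X_t,0)$ and the state argument is common to both terms, so the cross terms cancel and only the $(p\tht-p\ust)$ weight survives.

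Second, I would bound the per-step value gap. Using the quadratic form of the cost-to-go with matrices $\{P_{\theta}(\ell)\}$, the gap $|V\tht(X_t,1)-V\tht(X_t,0)|$ is controlled by $P_{\max}$ times the conditional second moment of the next state. Substituting the closed-loop recursion $X_t=(A\ust+B\ust K_{\theta(t)}(\ell(t)))\,x(t)+w(t)$, bounding the closed-loop gain by $G_{cl,\max}$ (its definition in Lemma~\ref{lemma:bound_T5}), using $\bE[w^2(t)\mid\cF_t]=\sigma^2$, and invoking the state bound $|x(t)|\le g(\delta,T)$ from Lemma~\ref{lemma:bounded_xt}, valid on $\cH\cap\cJ$, this per-step gap is at most the quantity $P_{\max}\left(G_{cl,\max}^2\,g(\delta,T)+\sigma^2\right)$ appearing in the lemma. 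Factoring it out of the sum leaves $\cT_6\le P_{\max}(G_{cl,\max}^2 g(\delta,T)+\sigma^2)\sum_{t=1}^{T-1}|p\tht-p\ust|$.

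Third, I would bound $\sum_{t}|p\tht-p\ust|$. On $\cE$, optimism guarantees that both $p\ust$ and the chosen $p\tht$ lie in $\cC_3(\tau_k)$, so $p\tht$ sits within $\beta_3(\tau_k)$ of the empirical mean $\hat p(\tau_k)$; on $\cJ$, the Azuma--Hoeffding deviation bound~\eqref{ineq:ah} gives $|\hat p(\tau_k)-p\ust|\lesssim\sqrt{\alpha\sigma^2_{p\ust}\log\tau_k/\tau_k}$. Combining the two, $|p\tht-p\ust|\lesssim\sqrt{\alpha\log\tau_k/\tau_k}$, which is constant within each episode. Summing over $t$ and using the episode schedule (each episode is at most as long as the time elapsed before it, so the start $\tau(t)$ of the episode containing $t$ satisfies $\tau(t)\ge t/2$) reduces the sum to $\sum_t 1/\sqrt{t}=O(\sqrt T)$, while the logarithmic factor contributes $\sqrt{\alpha\log T}$; hence $\sum_t|p\tht-p\ust|=O(\sqrt{\alpha T\log T})$. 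Multiplying by the per-step bound yields the claim.

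The hard part will be the first step: making the reduction rigorous requires checking that $\ell\tht(t+1)$ and $\ell\ust(t+1)$ are independent of $X_t$ given $\cF_t$, so that the expectations factor cleanly and the shared state argument eliminates the cross terms, leaving only the $(p\tht-p\ust)$ weight. The remaining delicacy is coordinating the two good events --- using $\cE$ for the optimistic feasibility of $p\tht$ and $\cJ$ for the concentration of the realized channel frequency $\hat p$ --- and verifying that the episode-doubling schedule indeed produces the $O(\sqrt T)$ telescoping with precisely the stated $\sqrt{\alpha\log T}$ factor.
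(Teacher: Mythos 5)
Your proposal is correct and follows essentially the same route as the paper's proof, which compresses your three steps into one display: bounding each summand by $|p_{\theta(t)} - p_{(\star)}|\, P_{\max}\left( G^{2}_{cl,\max} x^{2}(t) + \sigma^2 \right)$, pulling out the maximum of $|x(t)|$ via Lemma~\ref{lemma:bounded_xt} on $\cH \cap \cJ$, and summing the channel-estimate errors using the confidence intervals on $\cE$. Your write-up actually supplies details the paper leaves implicit --- the conditional-independence argument that isolates the $(p_{\theta(t)}-p_{(\star)})$ weight, and the combination of $\cC_3$ with the $\cJ$-concentration plus the episode condition $\tau(t) \ge t/2$ that accounts for the $\sqrt{\alpha T \log T}$ factor (the paper's citation of $\beta_1(t)$ there is evidently a typo for $\beta_3$).
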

\begin{proof}
We have
\begin{align*}
\cT_6 &\le \sum_{t=1}^{T-1} |p\tht - p\ust | P_{\max}\left(  G^{2}_{cl,\max} x^{2}(t)  + \sigma^2 \right) \\
&\le P_{\max}\left(  G^{2}_{cl,\max} \max_{t\in [1,T]} x^{2}(t)  + \sigma^2 \right)\left( \sum_{t=1}^{T-1} |p\tht - p\ust | \right).
\end{align*}
The proof is completed by noting that on $\cE$, we have $|p\tht - p\ust | \le \beta_1(t)$, while on $\cH \cap \cJ $ we have $\max_{t\in [1,T] } |x(t)| \le g(\delta,T)$.
\end{proof}
\begin{lemma}[Bounding $N(T)$]\label{lemma:bount_nt}
Define
\begin{align}\label{def:f}
&f(\delta,T) := \log\left( 1 + Tg^{2}(\delta,T)\slash \lambda\right)\notag\\
& +  \log\left( 1 + TK^{2}_{\max}g^{2}(\delta,T)\slash \lambda \right) + \log\left(T\right) .
\end{align}
We have that 
\begin{align*}
N(T) \le f(\delta,T)  \mbox{ on } \cH \cap \cJ.
\end{align*}
\end{lemma}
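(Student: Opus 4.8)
The plan is to bound $N(T)$, the number of episodes, by counting separately how many episode transitions can be charged to each of the three triggering conditions in Algorithm~\ref{algo:ucb}: (i) the doubling of $V_1$ (the condition $V_1(t)\ge 2V^{1,\star}$), (ii) the doubling of $V_2$, and (iii) the doubling of elapsed time (the condition $t\ge 2\tau$). Every episode after the first is initiated because at least one of these three conditions holds, so an upper bound on $N(T)$ follows by summing the maximal number of transitions each trigger can individually induce; double-counting the episodes that happen to satisfy several conditions at once only inflates the bound, which is harmless. Throughout I work on $\cH\cap\cJ$, where Lemma~\ref{lemma:bounded_xt} provides the state bound that ultimately controls how large $V_1(T)$ and $V_2(T)$ can be.

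First I would handle triggers (i) and (ii) by a telescoping-product argument. Writing the episode start times as $\tau_1=1<\tau_2<\cdots$, note that $V^{1,\star}$ is reset to $V_1(\tau_k)$ at the start of episode $k$, so across \emph{any} episode we have $V_1(\tau_{k+1})\ge V_1(\tau_k)$ (as $V_1$ is non-decreasing), while across a $V_1$-triggered episode we have $V_1(\tau_{k+1})\ge 2V_1(\tau_k)$. Multiplying the ratios $V_1(\tau_{k+1})/V_1(\tau_k)$ telescopes to $V_1(\tau_{N(T)})/\lambda\le V_1(T)/\lambda$, and since each $V_1$-triggered episode contributes a factor of at least $2$ to this product, the number of such episodes is at most $\log_2\!\big(V_1(T)/\lambda\big)$. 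The identical argument applied to $V_2$ bounds the trigger-(ii) episodes by $\log_2\!\big(V_2(T)/\lambda\big)$.

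Next I would treat the time trigger: whenever $t\ge 2\tau$ fires we have $\tau_{k+1}\ge 2\tau_k$, and since $\tau_k$ is increasing in general the same telescoping yields $\tau_{N(T)}/\tau_1\le T$, giving at most $\log_2 T$ episodes from trigger (iii). The minimum-length constraint $\tau_{k+1}-\tau_k\ge L$ can only delay, merge, or lengthen episodes, and therefore never increases any of these three counts. Summing, $N(T)$ is at most $\log_2(V_1(T)/\lambda)+\log_2(V_2(T)/\lambda)+\log_2 T$ up to the additive constant for the initial episode. To reproduce $f(\delta,T)$ in~\eqref{def:f}, I convert these into state-free quantities on $\cH\cap\cJ$: Lemma~\ref{lemma:bounded_xt} gives $|x(t)|<g(\delta,T)$, so $V_1(T)=\lambda+\sum_s x^2(s)(1-\ell(s))\le \lambda+Tg^2(\delta,T)$, and since $u(t)=K_{\theta(t)}(\ell(t))x(t)$ with $|K_{\theta(t)}(\ell(t))|\le K_{\max}$ we likewise obtain $V_2(T)\le \lambda+TK_{\max}^2 g^2(\delta,T)$; substituting these matches the three terms of~\eqref{def:f}.

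This is essentially a standard doubling-trick count, so no single step is deep. The main point requiring care — and the place I would expect a careful reader to object — is the bookkeeping that $V^{1,\star}$ and $V^{2,\star}$ are reset at \emph{every} episode (not only at the ones that the corresponding variable triggered); the telescoping product is precisely the device that absorbs this, because the non-triggering episodes contribute ratios $\ge 1$ that do not disturb the product bound. A secondary, purely cosmetic issue is the base of the logarithm (the doublings naturally give $\log_2$ rather than the $\log$ written in~\eqref{def:f}); since $N(T)$ enters the regret only inside the $\tilde O(\cdot)$ bound, this constant is immaterial to the final theorem.
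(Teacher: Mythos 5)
Your proof is correct and follows essentially the same route as the paper's: split the episodes according to which of the three triggers fired, bound the $V_1$- and $V_2$-triggered counts via the doubling relation $V_1(T)\ge 2^{N_1(T)}\lambda$ (your telescoping product merely makes this explicit), and convert to $f(\delta,T)$ using the state bound of Lemma~\ref{lemma:bounded_xt} together with $|u(t)|\le K_{\max}|x(t)|$. Your remarks on double-counting, on the minimum episode length $L$, and on the $\log_2$ versus $\log$ base are careful refinements of details the paper passes over silently, and they do not alter the argument.
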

\begin{proof}
Recall that a new episode starts only when either a) $V_1(t)$ or $V_2(t)$ doubles, or b) samples used for estimating channel reliability double. Let $N_1(T),N_2(T)$ denote the number of episodes that began due to doubling of $V_1(t),V_2(t)$ respectively. Let $N_3(T)$ be number of episodes that began due to b). 
Clearly, $V_1(T) \ge 2^{N_1(T)} \lambda$, while on $ \cH \cap \cJ$ we have $|x(t)|\le g(\delta,T)$ (Lemma~\ref{lemma:bounded_xt}) so that $V_1(T) \le \lambda + Tg^{2}(\delta,T)$. Combining these, we obtain $N_1(T) \le \log\left( 1 + Tg^{2}(\delta,T)\slash \lambda \right)$. Similarly, $N_2(T) \le \log\left( 1 + TK^{2}_{\max}g^{2}(\delta,T)\slash \lambda \right)$. Also, $N_3(T) \le \log\left(T\right)$. The proof then follows by noting that $N(T) = N_1(T)+N_2(T)+N_3(T)$.
\end{proof}

\begin{lemma}[Bounding fluctuations within an episode]\label{lemma:t_4}
We have 
\begin{align*}
\cT_4 &\le P_{\max}  f(\delta,T) \cdot g(\delta,T),~ \forall \omega \in \cH \cap \cJ, \mbox{ and},\\
\cT_2 &\le 2P_{\max} g^{2}(\delta,T),~ \forall \omega \in \cH \cap \cJ,
\end{align*}
where $g(\delta,T),f(\delta,T)$ are as in~\eqref{def:upper_bound_expr},~\eqref{def:f}.
\end{lemma}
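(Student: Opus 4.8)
The plan is to exploit two structural facts: under UCB-NCS the estimate $\theta(t)$ is \emph{piecewise constant} in $t$, changing only at the start of a new episode, and the relative value function is quadratic in the state with mode-dependent coefficient, $V_{\theta}(x,\ell)=P_{\theta}(\ell)x^2$. With these in hand, both bounds reduce to counting episodes and invoking the pathwise state bound $|x(t)|\le g(\delta,T)$ from Lemma~\ref{lemma:bounded_xt}.

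For $\cT_4=\sum_{t=1}^{T-1}V\tht(x(t),\ell(t))-V_{\theta(t-1)}(x(t),\ell(t))$ I would first observe that whenever $\theta(t)=\theta(t-1)$ the summand vanishes identically, since the two value functions then agree and are evaluated at the common point $(x(t),\ell(t))$. Hence only the indices $t$ at which a new episode begins survive, and there are exactly $N(T)$ of them. For each such $t$ the difference equals $\bigl(P\tht(\ell(t))-P_{\theta(t-1)}(\ell(t))\bigr)x^2(t)$, which I bound using $|P_{\theta}(\ell)|\le P_{\max}$ (definition~\eqref{def:k_max}) and $|x(t)|\le g(\delta,T)$ on $\cH\cap\cJ$ (Lemma~\ref{lemma:bounded_xt}). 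Summing the per-episode contributions and applying Lemma~\ref{lemma:bount_nt}, which gives $N(T)\le f(\delta,T)$ on $\cH\cap\cJ$, converts the count into the stated bound $\cT_4\le P_{\max}\,f(\delta,T)\,g(\delta,T)$.

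For $\cT_2$, the residual boundary term of the telescoping of $\cT_1$, I would bound its two value-function evaluations separately. The second, $V_{\theta(1)}(x(1),\ell(1))=P_{\theta(1)}(\ell(1))x^2(1)$, is at most $P_{\max}\,g^2(\delta,T)$. The first, $V_{\theta(T-1)}(x_{\theta\ust}(T;u(T-1)),\ell\ust(T))_{\cF_{T-1}}$, is a conditional expectation of $P_{\theta(T-1)}(\ell\ust(T))$ times the squared predicted next state $(A\ust x(T-1)+B\ust u(T-1)+w(T-1))^2$; substituting $u(T-1)=K_{\theta(T-1)}(\ell(T-1))x(T-1)$ bounds its deterministic part by $G^2_{cl,\max}\,x^2(T-1)$ and its noise part by $\sigma^2$, both controlled by $g^2(\delta,T)$ on $\cH\cap\cJ$. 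Adding the two pieces yields $\cT_2\le 2P_{\max}\,g^2(\delta,T)$.

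The main obstacle is conceptual rather than computational: I must ensure the episode-count bound and the pathwise state bound are invoked on the \emph{same} event $\cH\cap\cJ$, since both the episode triggers (doubling of $V_1,V_2$) and the magnitude of $x(t)$ are trajectory-dependent, and I must handle the conditional expectation in $\cT_2$ so that the deterministic pathwise bound $|x(T-1)|\le g(\delta,T)$ (valid on $\cH\cap\cJ$) correctly dominates the conditional second moment of the next state, including the additive noise term $\sigma^2$. Once these are in place the remaining steps are routine algebra.
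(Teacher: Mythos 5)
Your proposal is correct and follows essentially the same route as the paper: the summands of $\cT_4$ vanish except at the $N(T)$ episode boundaries, so $\cT_4 \le N(T)\, P_{\max} \max_{t} x^{2}(t)$, and the bounds $N(T)\le f(\delta,T)$ (Lemma~\ref{lemma:bount_nt}) and $|x(t)|\le g(\delta,T)$ (Lemma~\ref{lemma:bounded_xt}) on $\cH\cap\cJ$ finish the argument, with your treatment of the boundary term $\cT_2$ merely spelling out what the paper's proof leaves implicit. One remark: exactly as in the paper's own proof, your derivation actually yields $P_{\max}\, f(\delta,T)\, g^{2}(\delta,T)$ for $\cT_4$ (since $x^{2}(t)\le g^{2}(\delta,T)$), so the first power of $g$ in the stated bound is an inconsistency you inherit from the paper rather than introduce.
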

\begin{proof}
Recall $\cT_4 = \sum\limits_{t=1}^{T-1} V_{\theta(t)}(x(t),\ell(t)) - V_{\theta(t-1)}(x(t),\ell(t))$. Hence, the term in summation corresponding to time $t$ is non-zero only if the UCB estimate $\theta(t)$ changes, i.e., a new episode begins at time $t$. Thus,
\begin{align*}
\cT_4 \le N(T) P_{\max} \max_{t\in [1,T] } |x(t)|^2.
\end{align*}
The claim then follows by substituting bounds on $N(T)$ and $\max_{t\in [1,T]}|x(t)|$ from Lemma~\ref{lemma:bounded_xt} and Lemma~\ref{lemma:bount_nt}.
\end{proof}

\begin{lemma}\label{lemma:bound_fluctuations_xt}
Define,
\begin{align}\label{def:h}
h(\delta,T):= \max\left\{ 1 + 2L\left( 1+ \frac{1}{\lambda}\left[\frac{\log^{1\slash 2}\left(T\slash \delta\right)}{1-\exp(-\eta)}\right]^{2} \right),2\right\}.
\end{align}
We then have that
\begin{align*}
\frac{V_1(t)}{V_1(\tau_k)} \le h(\delta,T),~\forall t\in \left[\tau_k,\tau_{k+1}-1\right].
\end{align*}
Note that we have suppressed its dependence upon $\eta,L$ in order to simplify the notation.
\end{lemma}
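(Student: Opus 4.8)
The plan is to exploit the two distinct mechanisms that terminate an episode in Algorithm~\ref{algo:ucb}---the doubling of $V_1$ (or $V_2$) and the minimum-length constraint $\tau_{k+1}-\tau_k\ge L$---and to treat the time steps governed by each separately. First I would record the elementary facts that $V_1(\cdot)$ is non-decreasing, that its increment at step $s$ equals $x^2(s)(1-\ell(s))\le x^2(s)$, and that $V_1(\tau_k)\ge\lambda$. These are immediate from the definition~\eqref{def:gamma} of $V_1$ and will let me turn a bound on the accumulated state energy into a bound on the ratio.

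Next I would partition the episode $[\tau_k,\tau_{k+1}-1]$ according to whether $t-\tau_k\ge L$ or $t-\tau_k< L$. For $t-\tau_k\ge L$ the minimum-length requirement is already met, so if the episode has not yet ended the new-episode test must be false at $t$; in particular the doubling condition fails, i.e. $V_1(t)<2V^{1,\star}=2V_1(\tau_k)$, which yields $V_1(t)/V_1(\tau_k)<2$ on this portion of the episode. This is the easy regime.

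The genuinely delicate case is the initial, forced portion $t-\tau_k<L$, during which the algorithm cannot open a new episode even if $V_1$ has already doubled, so the ratio can exceed $2$ and the naive argument breaks down. Here I would bound the growth directly by telescoping the increments,
\begin{align*}
V_1(t)=V_1(\tau_k)+\sum_{s=\tau_k}^{t-1}x^2(s)(1-\ell(s))\le V_1(\tau_k)+L\,\max_{s\in[1,T]}x^2(s),
\end{align*}
since at most $L$ increments are accumulated before $t-\tau_k\ge L$. On $\cH\cap\cJ$, Lemma~\ref{lemma:bounded_xt} supplies $\max_{s}|x(s)|\le g(\delta,T)$, whose noise-driven part is exactly $\log^{1/2}(T/\delta)/(1-\exp(-\eta))$; dividing through by $V_1(\tau_k)\ge\lambda$ then produces a ratio of the form $1+(L/\lambda)\bigl[\log^{1/2}(T/\delta)/(1-\exp(-\eta))\bigr]^2$.

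Finally I would combine the two regimes by taking the maximum of the bounds $2$ and $1+(L/\lambda)[\cdots]^2$, and observe that this maximum is dominated by $h(\delta,T)$ as defined in~\eqref{def:h}: the factor $2$ in front of $L$ together with the additive $2L$ provides ample slack to absorb the boundary term and the initial-condition contribution $|x(0)|$ hidden inside $g(\delta,T)$. The main obstacle is precisely the recognition that the minimum-length constraint temporarily disables the doubling test, so that the clean factor-of-$2$ bound holds only after $L$ slots and one must instead control the accumulated state energy over the forced first $L$ slots; once the state bound of Lemma~\ref{lemma:bounded_xt} is invoked, the remainder is routine bookkeeping of constants.
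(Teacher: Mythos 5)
Your overall route coincides with the paper's: split the episode at the minimum-length boundary $L$, use the failed doubling test to obtain the factor $2$ once $t-\tau_k\ge L$ (the paper's Case b), and telescope the at most $L$ increments $x^{2}(s)(1-\ell(s))$ over the forced initial window (the paper's Case a). The gap is in your handling of the forced window. You bound $\max_{s}|x(s)|$ by the \emph{global} bound $g(\delta,T)=|x(0)|+\log^{1\slash 2}(T\slash\delta)\slash(1-\exp(-\eta))$ from Lemma~\ref{lemma:bounded_xt} and divide by $V_1(\tau_k)\ge\lambda$, which yields
\begin{align*}
\frac{V_1(t)}{V_1(\tau_k)} \le 1 + \frac{2L}{\lambda}|x(0)|^{2} + \frac{2L}{\lambda}\left[\frac{\log^{1\slash 2}\left(T\slash \delta\right)}{1-\exp(-\eta)}\right]^{2}.
\end{align*}
The term $(2L\slash\lambda)|x(0)|^{2}$ is \emph{not} dominated by the additive $2L$ appearing in $h(\delta,T)$~\eqref{def:h} unless $|x(0)|^{2}\le\lambda$, and neither $x(0)$ nor $\lambda$ is restricted. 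So your closing claim that the ``$2L$ slack absorbs the initial-condition contribution hidden inside $g(\delta,T)$'' is false in general: what you have actually proved is a bound of the form $\max\left\{1+(L\slash\lambda)\,g^{2}(\delta,T),\,2\right\}$, which is weaker than the stated $h(\delta,T)$ whenever $|x(0)|^{2}>\lambda$.

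The idea you are missing is that the paper restarts the stability estimate at the start of the episode rather than invoking the global state bound: on $\cH\cap\cJ$ it bounds $\max_{s\in[\tau_k,\tau_k+L]}|x(s)| \le |x(\tau_k)| + \log^{1\slash 2}(T\slash\delta)\slash(1-\exp(-\eta))$ (the same geometric-decay argument underlying Lemma~\ref{lemma:bounded_xt}, but with initial condition $x(\tau_k)$), and then normalizes the episode-start term by $V_1(\tau_k)$ itself, using $|x(\tau_k)|^{2}\slash V_1(\tau_k)\le 1$, while only the noise term is divided by $\lambda$. That normalization is exactly the source of the constant $1$ inside the parentheses of~\eqref{def:h}. (As an aside, this step of the paper is itself delicate: $V_1(\tau_k)=\lambda+\sum_{s=1}^{\tau_k-1}x^{2}(s)(1-\ell(s))$ accumulates $x^{2}(s)$ only at slots with $\ell(s)=0$ and stops at $s=\tau_k-1$, so $V_1(\tau_k)\ge x^{2}(\tau_k)$ is not automatic; but repairing your argument so as to reach the stated $h(\delta,T)$ requires the episode-local restart at $x(\tau_k)$, not the global $g(\delta,T)$.)
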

\begin{proof}
Consider the following cases.\par
Case a): $t\in [\tau_k,\tau_k +L]$. We have
\begin{align*}
&\frac{V_1(t)}{V_1(\tau_k)}  -1=  \sum\limits_{s=\tau_k +1}^{t} x^{2}(s) \slash V_1(\tau_k)\\
& \le  L \left(\max_{s\in [\tau_k,\tau_k +L]}x(s) \right)^{2}\slash V_1(\tau_k)\\
& \le  L \left(|x(\tau_k)| +\frac{\log^{1\slash 2}\left(T\slash \delta\right)}{1-\exp(-\eta)} \right)^{2}\slash V_1(\tau_k)\\
& \le 2L\left( \frac{|x(\tau_k)|^{2}}{V_1(\tau_k)} + \frac{1}{V_1(\tau_k)}\left[\frac{\log^{1\slash 2}\left(T\slash \delta\right)}{1-\exp(-\eta)}\right]^{2} \right)\\
& \le 2L\left( \frac{|x(\tau_k)|^{2}}{|x(\tau_k)|^{2}} + \frac{1}{\lambda}\left[\frac{\log^{1\slash 2}\left(T\slash \delta\right)}{1-\exp(-\eta)}\right]^{2} \right). 
\end{align*}
Case b): $t\in t\in [\tau_k +L+1,\tau_{k+1}-1]$. In this case we have $\frac{V_1(t)}{V_1(\tau_k)}<2$, since a new episode begins
once the ratio becomes greater than or equal to $2$. 
\end{proof}

\bibliographystyle{ieeetran}
\bibliography{lcss}

\end{document}